\newcommand{\E}{\mathbb{E}}
\renewcommand{\P}{\mathbb{P}}
\newcommand{\one}{\mathbb{I}}
\newcommand{\cH}{\mathcal{H}}
\newcommand{\cE}{\mathcal{E}}
\newcommand{\cA}{\mathcal{A}}
\newcommand{\cS}{\mathcal{S}}
\newcommand{\cB}{\mathcal{B}}
\newcommand{\cN}{\mathcal{N}}
\newcommand{\cJ}{\mathcal{J}}
\newcommand{\algoname}{\textsc{ArmSwitch}\;}
\newcommand{\bettername}{\textsc{Elim}\xspace}
\newcommand{\better}[3]{\bettername_{#3}(#1,#2)}
\newcommand{\GOOD}{\text{GOOD}\xspace} 
\newcommand{\BAD}{\text{BAD}\xspace} 
\newcommand{\CONST}{\textsc{Const}\xspace} 
\newcommand{\EXP}{\textsc{EXP}} 
\newcommand{\B}{B} 
\DeclareMathOperator*{\argmax}{argmax}
\newtheorem{theorem}{Theorem}[section]
\newtheorem{corollary}[theorem]{Corollary}
\newtheorem{lemma}[theorem]{Lemma}
\title{\LARGE \bf
A New Look at Dynamic Regret for Non-Stationary Stochastic Bandits
}
\author{Yasin Abbasi-Yadkori \\ 
DeepMind\\
{\small\tt yadkori@deepmind.com} 
\and 
Andr\'{a}s Gy\"{o}rgy \\ 
DeepMind\\
{\small\tt agyorgy@deepmind.com} 
\and 
Nevena Lazi\'{c} \\ 
DeepMind\\
{\small\tt nevena@deepmind.com} 
}
\date{}                     
\begin{document}

\maketitle

\begin{abstract}
We study the non-stationary stochastic multi-armed bandit problem, where the reward statistics of each arm may change several times during the course of learning. The performance of a learning algorithm is evaluated in terms of their dynamic regret, which is defined as the difference between the expected cumulative reward of an agent choosing the optimal arm in every time step and the cumulative reward of the learning algorithm. One way to measure the hardness of such environments is to consider how many times the identity of the optimal arm can change. We propose a method that achieves, in $K$-armed bandit problems, a near-optimal $\widetilde O(\sqrt{K N(S+1)})$ dynamic regret, where $N$ is the time horizon of the problem and $S$ is the number of times the identity of the optimal arm changes, without prior knowledge of $S$. Previous works for this problem obtain regret bounds that scale with the number of changes (or the amount of change) in the reward functions, which can be much larger, or assume prior knowledge of $S$ to achieve similar bounds.
\end{abstract}

\section{Introduction}

The multi-armed bandit (MAB) problem is the canonical problem to study the exploration-exploitation dilemma. At each time step $n\in \{1, \ldots, N\}$, the learner selects an arm (also called action) $a_n \in \{1, .., K\}$ and receives a reward $r_n$ generated from an unknown distribution which may depend on both the time step and the action. The learner's goal is to maximize the sum of the rewards. 
In the standard stochastic MAB problem, the reward distribution for each arm is assumed to be stationary, and algorithms are evaluated based on their expected \emph{regret}, which is the difference between the expected rewards obtained by the algorithm and the best fixed arm in hindsight. 

In this work, we consider the MAB problem with reward distributions that are non-stationary and can change several times during the course of learning. 
We evaluate learning algorithms in terms of their dynamic regret, which is the difference between the cumulative expected rewards obtained by the best non-stationary policy selecting the optimal arm in every time step and that of the learning algorithm.
MAB problems with non-stationary reward distributions have been studied extensively in literature. This includes a variety of settings, including adversarial rewards, limited total variation of change \citep{besbes2014stochastic}, limited number of switches \citep{auer2002nonstochastic}, as well as additional assumptions on the process generating the changes \citep{ortner2014regret,slivkins2008adapting}. 

In general, the achievable dynamic regret depends on the assumptions made about the reward process.
When the reward distribution changes at most $M$ times, known as the switching bandit problem \citep{garivier2011upper}, the EXP3.S algorithm of \cite{auer2002nonstochastic} can achieve $O(\sqrt{K(M+1)N \log(KN)})$ regret, when its tuning depends on $M$, which therefore needs to be known in advance. This result is known to be minimax optimal up to the logarithmic factors. Several other algorithms can be tuned to achieve bounds that are optimal in $N$ and $M$ given the prior knowledge of $M$, including the sliding-window UCB algorithm of \citet{garivier2011upper} and an elimination-based method by \citet{allesiardo2017non}.

The first algorithm to obtain near-optimal regret without knowing the number of changes $M$ is the \textsc{AdSwitch} algorithm of \citet{auer2018adaptively}. While the original version of \textsc{AdSwitch} is only applicable to the case of $K=2$, it was subsequently extended to a general $K$ by \citet{AGO-2019, CLLW-2019, ACGLLOW-2019}. The algorithm starts with an initial estimation stage that detects the current optimal arm, and then plays that arm while performing periodic exploration in order to detect changes. Another recent such algorithm is \textsc{Master} \citep{wei2021non}, which, specialized to the switching bandit problem, runs a baseline UCB1 algorithm \citep{auer2010ucb} at multiple time scales, and periodically resets if non-stationarity is detected.

While the number of times the reward distribution changes is often indeed related to the hardness of the problem, it can be a quite pessimistic measure of complexity: for example, a change in the reward distribution of a suboptimal arm that leaves it suboptimal, or a slight change in the reward of the optimal arm  so that it remains optimal should not really affect the performance of good learning algorithms.

To address this issue, in this paper we aim to bound the regret in terms of the number of changes in the identity of the optimal arm $S$, which can be much smaller than the number of reward changes $M$ considered in prior work (note that EXP3.S of \citealp{auer2002nonstochastic} can readily give a bound which scales with $S$ instead of $M$, but it requires prior knowledge of $S$, as discussed above). We propose a modified version of \textsc{AdSwitch}, called \algoname, which performs periodic exploration in order to detect a change in the optimal arm, rather than a change in the reward gap. This allows us to obtain a regret bound which scales as $\widetilde O(\sqrt{KN(S+1)})$, without the prior knowledge of $S$. \citet{SK-2022} obtain similar results in an independent and parallel work.

Similarly to \textsc{AdSwitch}, our algorithm is based on a phased elimination procedure with restarts. The original phased elimination procedure explores arms uniformly at random until it can be determined that an arm is not optimal, at which point the arm is eliminated and moved to the set of ``bad'' arms, BAD. The arms in the BAD set are occasionally explored to detect a potential change in the optimal arm. The algorithm restarts the phased elimination when all arms are moved to the BAD set. 

The phased elimination procedure uses confidence intervals to perform arm eliminations. In our setting, even though rewards can change in an arbitrary fashion, we can construct confidence intervals (based on the idea of importance weighting) for a notion of weighted total rewards using the random exploration property of phased elimination. Using such confidence intervals however introduces an issue: 
it is difficult to control the probability of detecting the optimal arm when the optimal arm is in the \BAD set and the number of active arms fluctuate in an arbitrary way, and this can lead to a regret bound that scales worse than the optimal $\sqrt{K}$ scaling in the number of actions. 

In order to resolve this issue, we use a non-uniform exploration in the phased elimination procedure that assigns a fixed $\frac{1}{K}$ sampling probability to an active \BAD arm,\footnote{By \BAD (\GOOD) arms, we mean the set of arms that are in the \BAD (\GOOD) set.} and use data from sampling arms with the fixed probability of $\frac{1}{K}$ in constructing a ``second type" confidence interval. Using such confidence intervals, we show a tighter control on the detection probability, eventually giving the desired overall $\widetilde O(\sqrt{K N (S+1)})$ regret bound. 


\subsection{Notation}
For any integer $K$, $[K]=\{1,\ldots,K\}$. For integers $n'<n$, we use $[n':n]$ to denote the set $\{n',n'+1,\ldots,n\}$ (we will often refer to such sets as intervals and use similarly half-open/open intervals not including the corresponding boundary points), and for any sequence $b_{n'},b_{n'+1},\ldots,b_{n}$, $b_{n':n}=\sum_{t=n'}^{n} b_t$. For a discrete set $I$ we denote its cardinality by $|I|$. For any two real numbers $x,y$, $x \vee y$ denotes their maximum $\max\{x,y\}$. For an event $\cE$, its complement is denoted by $\overline\cE$, and the indicator $\one\{\cE\}$ is 1 if $\cE$ holds and zero otherwise.

\section{Problem setting}

We consider multi-armed bandit problems with $K$ arms and a known time horizon $N$ (with $K,N\ge 2$). The interaction between the learning algorithm and the environment is as follows. At each time step $n$, the algorithm selects an arm $a \in [K]=\{1, \ldots, K\}$ and receives a reward $r_n \in [0,1]$ drawn according to an unknown distribution $D_n(a_n)$. 

Let $g_n(a)$ denote the mean of $D_n(a)$ for any $a \in [K]$ and $n \in [N]$, and let $a_n^* \in \argmax_{a \in [K]} g_n(a)$ be the optimal arm at time $n$. We assume $g_n(a) \in [0,1]$. Our assumptions on the rewards and their expectation is to simplify the presentation and can be replaced with a more general assumption with bounded expectation and sub-Gaussian noise. The sequence of reward distributions is chosen by an oblivious adversary before the start of the game. 
We evaluate the performance of algorithms in terms of the dynamic regret, defined with respect to the sequence of optimal actions as
\[
R_N=\sum_{n=1}^N (g_n(a_n^*) - g_n(A_n)) \,,
\]
where $A_n$ denotes the arm selected by the algorithm at time $n$.
Define 
\[
S(x_1,\ldots,x_N) = \sum_{n=1}^{N-1} \one\{x_{n}\neq x_{n+1}\} \;.
\]
We denote by $S = S(a_1^*,\ldots,a_N^*)$ the number of times the identity of the optimal arm changes. Our goal is to design an algorithm that satisfies the following regret bound without the knowledge of $S$:
\begin{equation}
\label{eq:tracking3}
\E[R_N] = O\left(\sqrt{K N S(a_1^*,\ldots,a_N^*)} \right)\;.
\end{equation}
We emphasize that this goal is not addressed by literature on stochastic non-stationary bandits. In most of the existing results, regret bounds depend on the number of changes in the reward distributions $D_n$ instead, which can be much larger than $S$.

\section{Algorithm}

\begin{algorithm}[ht!]
\caption{\algoname algorithm}
\label{alg:armswitch}
 \textbf{Input:} time horizon $N$, constant $\delta \in (0, 1)$
\begin{algorithmic}[1]
\STATE Initialize $s=0, n=0$.
\STATE \textbf{Start a new episode:} \label{ln:new_episode} \\
 $s \gets s+1$, $t_s \gets n+1$\\
 $\GOOD \gets \{1, \ldots K \}$, $\BAD \gets \emptyset$.\\
 $\text{Active}(a) \gets n+1,\, 
 \forall a$. 
\STATE \textbf{Next time step:}  $n \gets n+1$ \label{ln:new_step} \\
\COMMENT{** History $\cH'_n$ is the information available to the algorithm at this point **}
\FOR{$a \in \BAD$}\label{ln:explo1}
\FOR{$\varepsilon \in {\cal B} = \{ 2^{-1}, 2^{-2},  \ldots 2^{-\lceil\log_2 N \rceil} \}$} \label{ln:randomchoice}
\STATE With probability $\varepsilon \sqrt{s / (KN)}$:
\IF {$\B(a) \leq 0$}
\STATE Set $\text{Active}(a) \gets n$.\label{ln:active}
\ENDIF
\STATE $\B(a) \gets \max(\B(a), 1/\varepsilon^2)$.
\label{ln:explo2}
\ENDFOR
\ENDFOR \label{ln:explo3} 
\STATE Define the active set $\cA = \GOOD \cup \{a
\in \BAD: \B(a)\ge \frac{1}{K}\}$. \label{ln:activeset}
\STATE Set $\B(a) \gets 0$ for all $a \not\in \cA$, and let $m=|\cA \cap \BAD|$. \label{ln:S-update1} \\
\COMMENT{** History $\cH_n$ is the information available to the algorithm at this point **}
\STATE Define distribution $P_n$ by $P_n(a)=\frac{1}{K}$ for $a\in \cA \cap \BAD$, and $P_n(a)=\frac{1-m/K}{|\GOOD|}$ for $a\in \GOOD$
\STATE Select $A_n$ by sampling from $P_n$ and receive reward $r_n$.\\ \label{ln:sampling}
\COMMENT{** Define variables with index $n$: $\GOOD_n=\GOOD$, $\BAD_n=\BAD$, $\B_n(a)=\B(a)$, etc. **}
\label{ln:index}
\STATE Set $\B(a) \gets \B(a) - \frac{1}{K}$ for all $a\in \cA \cap \BAD$. \label{ln:S-update}
\FOR {$a, a' \in \cA$}
\label{ln:elim1}
\IF {$\better{a'}{a}{n}$}
\STATE If $a \in \GOOD$, move $a$ to $\BAD$.\label{ln:a-to-bad}
\STATE If $a \in \BAD$, set $\B(a) \gets 0$. \label{ln:explo-zero}
\ENDIF \label{ln:elim2}
\IF {$\GOOD$ is empty}
\STATE Go to \ref{ln:new_episode} (start a new episode)
\ENDIF
\ENDFOR
\STATE Go  to \ref{ln:new_step} (next time step).
\end{algorithmic}
\end{algorithm}

\begin{algorithm}[ht!]
\caption{The $\better{a'}{a}{n}$ subroutine}
\label{alg:elim}
\begin{algorithmic}[1]
\FOR {$n' \in [\max\{\text{Active}_n(a), \text{Active}_n(a')\}, n]$}
\IF {$\widehat{\widetilde \Delta}_{n':n}(a',a) > 12 C_{n',n} \left(\sqrt{\frac{n-n'+1}{K}} \vee C_{n',n} \right)$, or $a,a'\in \GOOD$ and $\widehat \Delta_{n':n}(a',a) > 12 C_{n',n} \left(\sqrt{P_{n':n}} \vee C_{n',n} \right)$ 
} 
\RETURN true
\ENDIF
\ENDFOR
\RETURN false 
\end{algorithmic}
\end{algorithm}

To develop our algorithm, we start with the \textsc{AdSwitch} method of \citet{AGO-2019},
which was developed for the standard piecewise stationary scenario, where the reward statistics may change only a limited number of times (say $M$), and they remain constant inbetween. Conceptually, \textsc{AdSwitch} works the following way. For every stationary segment it maintains a set of arms which can be optimal for the given segment (called the \GOOD set), and these arms are pulled in a round robin fashion. The expected reward of each arm is estimated based on the observations (calculating the average reward for each arm together with confidence intervals), and if the estimates imply with high probability that an arm cannot be optimal, it is removed from the \GOOD set. This phased elimination strategy is known to achieve an optimal regret rate for stationary stochastic bandits. To be able to detect the end of a stationary segment, \textsc{AdSwitch} also explores arms which are not in the GOOD set (the set of those arms is called the BAD set), with a carefully designed exploration strategy, striking a good balance between the exploration probability and the amount of change the selected exploration strategy can detect.
If a change is detected (with high probability), the algorithm declares the end of the current segment, and it is restarted, with all the arms being in the GOOD set and all estimates reset.

Our algorithm, called \algoname (for considering the number of times the optimal \emph{arm} changes), is based on similar ideas, but we need to introduce several changes in both the algorithm and its analysis to make it work in our setting.
Similarly to \textsc{AdSwitch}, \algoname tries to identify adaptively the segments where the identity of the optimal arm remains the same, and also to learn the optimal arm in each segment. As such, we also maintain a \GOOD set of arms, containing the arms which can be optimal in the given segment, also carefully explore the arms in the \BAD set.\footnote{Throughout the paper, when these sets are referred to in a specific time step, they are indexed with that time step (as mentioned in the comment in line~\ref{ln:index} of the algorithm), and we also refer to the arms in the \GOOD, resp.\ \BAD, set as \GOOD, resp.\ \BAD, arms.} Note, however, that because in our case the mean rewards can change arbitrarily for every arm, deterministically going over all active arms (the \GOOD arms and the ones being explored, together the active arms $\cA$) and taking the average observed reward for every arm will not give good estimates of the arms' performances over the segment. Instead, we randomly sample from the active set $\cA$, and when comparing the performance of two arms over an interval where both arms were active, we compare their cumulative weighted reward where each reward is weighted with the sampling probability (instead of the average observed reward) irrespective of how many times the arms were actually used. We provide more details on the sampling and comparison procedures in the next section. Most notably, the sampling distribution is non-uniform and assigns a fixed probability of $1/K$ to any \BAD arm in $\cA$ (i.e., to arms in $\BAD \cap \cA$).

\subsection{The \algoname algorithm}
\label{sec:algdef}

Our algorithm is shown in Algorithm~\ref{alg:armswitch}. 
It is an elimination algorithm with repeated exploration and restarts. The algorithm proceeds in episodes $s = 1, 2, ...$; we use $t_s$ to denote the start time of the $s^{th}$ episode. 

The algorithm continuously maintains a set of arms \GOOD containing the arms which can be optimal in the current episode, while $\BAD=[K] \setminus \GOOD$ contains all other arms. At the beginning of each episode, \GOOD contains all the arms, which are then continuously eliminated (and moved to \BAD) when it can be proved with high probability based on the received rewards that they cannot be optimal for at least one time step  
of the episode. When all arms are eliminated from the \GOOD set, the algorithm knows that the identity of the optimal arm has changed with high probability, so a new episode is started (where again all arms can be optimal initially). In principle, in every time step the algorithm plays the arms in the \GOOD set uniformly at random. However, this would miss detecting if an arm from the \BAD set became good. To handle such cases, in every step \algoname may select, with some small probability, some arms from the \BAD set to be explored, and the algorithm may play these arms as well, beside the ones in the \GOOD set; the arms which can be played in any given time step are collected in the active set $\cA$. 

Next we describe each component of \algoname in detail:

\paragraph{Exploration of arms in the \BAD set.} 
To facilitate exploration of arms in the \BAD set, \algoname
maintains so called exploration obligations $\B(a)$, containing a prescribed sum of probabilities with which a bad arm has to be explored over multiple time steps (this is similar to exploration obligations used by \textsc{AdSwitch} prescribing how many times an arm has to be sampled, but it is better tuned for random sampling). When any arm gets to the \BAD set, its exploration obligation is set to 0 (lines~\ref{ln:a-to-bad}
and ~\ref{ln:explo-zero}). At the beginning of every time step (before an arm is played, see lines~\ref{ln:explo1}--\ref{ln:explo3}), the algorithm may schedule some exploration for some arms in the \BAD set:
The obligation (length) can be $1/\varepsilon^2$ for any $\varepsilon$ in an exponential grid $\cB = \{1/2, 1/4, \ldots , 2^{-\lceil \log_2 N \rceil}\}$, and longer explorations have smaller probabilities: in episode $s$,  $a \in \BAD_n$ and $\varepsilon \in \cB$, with probability $\varepsilon \sqrt{\frac{s}{K N}}$, we prescribe an exploration obligation of length $1/\varepsilon^2$ by setting $\B(a)$ to $\max(\B(a), 1/\varepsilon^2)$. We say that an obligation is \emph{scheduled} in a time step for arm $a$ if it is the longest obligation prescribed, and it is larger than the previous obligation, and define a corresponding event  $\EXP(a,n,\varepsilon)$ which holds if and only if an exploration obligation of length $1/\varepsilon^2$ is scheduled for arm $a$ in time step $n$ (i.e., $\B_n(a)=1/\varepsilon^2>\B_{n-1}(a)$). Note that the conditional probability of $\EXP(a,n,\varepsilon)$ (given the history up to this point) if $n$ belongs to episode $s$ is at most $\varepsilon \sqrt{\frac{s}{K N}}$.

Arms in the \BAD set with positive exploration obligations typically belong to the active set $\cA$ for the given time step, 
unless their exploration obligation is ``rounded down'' to 0 (see line~\ref{ln:S-update1}, and the description of the procedure for sampling an arm below).
After the algorithm plays an arm and receives a reward, the exploration obligations for any active arm in the \BAD set are reduced by the (conditional) probability for selecting that arm, that is, by $\frac{1}{K}$ (line~\ref{ln:S-update}).

\paragraph{Selecting arms.}
After possibly introducing new sampling obligations, the algorithm selects a set of active arms $\cA$ from which the played arm $A_n$ is selected. Set $\cA$ contains all the arms in the \GOOD set, and also all the \BAD arms with exploration obligations at least  $\frac{1}{K}$. The algorithm selects an action from a distribution that assigns probability $\frac{1}{K}$ to any \BAD arm in the active set, and is uniform over the \GOOD arms using the remaining probabilities (line~\ref{ln:activeset}). It will be helpful to define an additional variable $\widetilde A_n$ by implementing the sampling of $A_n$ in a two-step procedure: with probability $\frac{|\cA|}{K}$, an arm is sampled uniformly at random from $\cA$, and otherwise an arm is sampled uniformly at random from $\GOOD$. The sampled arm is denoted by $A_n$; if the first event happens, we also let $\widetilde A_n=A_n$, while in case of the second event, we let $\widetilde A_n=*$ to take a value not in the action set (an alternative definition of $\widetilde A_n$ is to choose $A_n$ according to $P_n$ and then set  $\widetilde A_n$ to  $A_n$ with probability $\tfrac{1}{K}/P_n(A_n)$ and to $*$ otherwise).

\paragraph{Eliminating arms from the active set.}
If, based on the observed rewards $r_n$ and the selected actions, the algorithm can prove (with high probability) that an arm in the active set $\cA$ cannot be optimal starting from the last time it has become active, it is removed from the active set, that is, it is removed from the set of \GOOD arms if it belonged there,  or its exploration obligations are deleted if it belongs to the \BAD set (lines~\ref{ln:elim1}--\ref{ln:elim2}). 
This elimination is based on observations in intervals when an arm is \emph{active}: we say that arm $a$ is active in time step $n$, if $a \in \cA_n$.
The start of the most recent active period is maintained in the variable Active: for arms in the \GOOD set, it is the beginning of the current episode (line~\ref{ln:new_episode}), while for active arms $a$ in the \BAD set, it is the time when the algorithm has decided to explore them (i.e., when the exploration obligation $\B(a)$ last became positive, see line~\ref{ln:active}). If $a \in \cA_n$, then $a$ is active in $[n':n]$ if and only if $\text{Active}_n(a) \le n'$.
We denote by the Boolean $\better{a'}{a}{n}$ if we can prove (with high probability) in time step $n$ that arm $a'$ was better in at least one time step than $a$ during a time interval ending at time $n$ when both arms were active. In this case we know that $a$ cannot be an optimal arm in this interval, hence it cannot be an optimal arm in the episode, and so it can be eliminated. The details of this procedure, which is at the heart of \algoname, are given in the next section (Section~\ref{sec:comparison}) and Algorithm~\ref{alg:elim}.

\subsection{Comparison of arms ($\better{a'}{a}{n}$)}
\label{sec:comparison}

We now specify the arm-elimination condition $\better{a'}{a}{n}$, which indicates whether arm $a'$ is better than $a$ in at least one time step in the current episode up to time $n$, with high probability, given the algorithm's history up to that point. If it is the case, arm $a$ can be eliminated from the active set by $a'$. The formal definition is given in Algorithm~\ref{alg:elim}.

Let $P_n(a)$ denote the probability of selecting arm $a$ in time step $n$, given the history $\cH_n$ up to that point (see line~\ref{ln:S-update1} of Algorithm~\ref{alg:armswitch}). By definition, $P_n(a)=\frac{1}{K}$ for $a\in \cA_n \cap \BAD_n$, and $P_n(a)=\frac{1}{|\GOOD_n|}\left(1 - \frac{|\cA_n \cap \BAD_n|}{K} \right)$ for $a\in \GOOD_n$ (note that the latter is always at least $1/K$). For $a\notin \cA_n$, $P_n(a)=0$. 
Let $A_n$ be the arm selected at time $n$, $G_n(a):=P_n(a) g_n(a)$ be the weighted expected reward of arm $a$ in time step $n$, and $\widehat G_n(a) := \one\{A_n=a\} r_n$ be its estimate; note that this is an unbiased estimate since $G_n(a) = \E[\widehat G_n(a)|\cH_n]$. We also define $\widetilde G_n(a):=\frac{g_n(a)}{K}$ and its estimate $\widehat{\widetilde G}_n(a) := \one\{\widetilde A_n=a\} r_n$ (recall that $\widetilde A_n$ equals $A_n$ with probability $1/(K P_n(A_n))$ and $*$ otherwise, see its definition in the paragraph on selecting arms in Section~\ref{sec:algdef}). For any interval $[n':n]$, we consider the following weighted sums of (possibly expected) rewards:
\begin{align*}
    G_{n':n}(a) &:= \sum_{t\in [n':n]} P_t(a) g_t(a)\,,\qquad &\widehat G_{n':n}(a)&:= \sum_{t\in [n':n]} \widehat G_t(a) = \sum_{t\in [n':n]} \one\{A_t=a\} r_t\,, \\
    \widetilde G_{n':n}(a) &:= \frac{1}{K}\sum_{t\in [n':n]} g_t(a)\,,\qquad &\widehat{\widetilde G}_{n':n}(a)&:= \sum_{t\in [n':n]} \widehat{\widetilde G}_t(a) = \sum_{t\in [n':n]} \one\{\widetilde A_t=a\} r_t \,. 
\end{align*}
Note that $\widetilde G_{n':n}(a)=G_{n':n}(a)$ and $\widehat{\widetilde G}_{n':n}(a)=\widehat G_{n':n}(a)$ for $a\in \bigcap_{t=n'}^n \cA_t\cap\BAD_t$.%
\footnote{Also notice that since $\one\{A_t=a\}$ is a Bernoulli random variable whose variance is $P_t(a)$, $G_{n':n}(a)$ is similar in spirit to the minimum-variance estimator of a joint mean of independent random variables, where the optimal weighting is proportional to the variance of each variable.}
Define $\Delta_{n':n}(a,a')$, $\widehat \Delta_{n':n}(a,a')$, $\widetilde\Delta_{n':n}(a,a')$ and $\widehat{\widetilde \Delta}_{n':n}(a,a')$ as follows:
for any arms $a,a' \in [K]$, let
\begin{align*}
\Delta_{n':n}(a,a') &:= G_{n':n}(a)-G_{n':n}(a')\,,\qquad &\widehat \Delta_{n':n}(a,a') &:= \widehat G_{n':n}(a)- \widehat G_{n':n}(a') \,, \\
\widetilde\Delta_{n':n}(a,a') &:= \widetilde G_{n':n}(a)- \widetilde G_{n':n}(a')\,,\qquad &\widehat{\widetilde \Delta}_{n':n}(a,a') &:= \widehat{\widetilde G}_{n':n}(a)- \widehat{\widetilde G}_{n':n}(a') \;.
\end{align*}
Note that $\widetilde\Delta_{n',n}(a,a')=-\widetilde\Delta_{n',n}(a',a)$ and $\widehat{\widetilde\Delta}_{n',n}(a,a')=-\widehat {\widetilde\Delta}_{n',n}(a',a)$.
Since $\widehat{\widetilde G}_{n}(a)$ is an unbiased estimate of $\widetilde G_{n}(a)$ as explained above, using martingale concentration it can be shown that $\widehat{\widetilde G}_{n',n}(a)$ is close to $\widetilde G_{n',n}(a)$ (and in turn $\widehat {\widetilde\Delta}_{n',n}(a,a')$ is close to $\widetilde\Delta_{n',n}(a,a')$): defining $C_{n',n}=
\sqrt{\log\left(\frac{2K N^2(\log(n-n'+1)+2)}{\delta}\right)}$
for some $\delta \in (0,1)$, Lemma~\ref{lemma:hoeff} shows that
\begin{align}
\label{eq:ghat}
\left|\widehat G_{n':n}(a) - G_{n':n}(a) \right| &\le 6\, C_{n',n} \left(\sqrt{P_{n':n}(a)} \vee C_{n',n} \right)\,,\\
\label{eq:gphat}
\left|\widehat{\widetilde G}_{n':n}(a) - \widetilde G_{n':n}(a) \right| &\le 6\, C_{n',n} \left(\sqrt{\frac{n-n'+1}{K}} \vee C_{n',n} \right)
\end{align}
with probability at least $1-\delta$ simultaneously for all actions $a \in [K]$ and interval $[n':n]\subset [N]$.

If $a$ is an optimal arm in the interval $[n':n]$, then $g_t(a) \ge g_t(a')$ for any other arm $a'$ and $t \in [n':n]$. Therefore, if both $a$ and $a'$ are active in $[n':n]$, $\widetilde\Delta_{n',n}(a,a') \ge 0$. Thus, if for an arm $a$ there exists another arm $a'$ such that $\widetilde\Delta_{n',n}(a,a')<0$ --- or equivalently, $\widetilde\Delta_{n',n}(a',a)>0$ --- then $a$ cannot be optimal, and hence can be eliminated from the set of potentially optimal arms for $[n':n]$. We use our empirical estimates $\widehat{\widetilde \Delta}_{n':n}(a,a')$ to verify, with high probability, if this happens: if \eqref{eq:gphat} holds for $a,a'$, then
\begin{align}
\label{eq:elimp-condition}
\widehat{\widetilde \Delta}_{n':n}(a',a) > 12 C_{n',n} \left(\sqrt{\frac{n-n'+1}{K}} \vee C_{n',n} \right)
\end{align}
implies that $\widetilde \Delta_{n':n}(a',a)>0$. The indicator $\better{a'}{a}{n}$ is true (see Algorithm~\ref{alg:elim}) if \eqref{eq:elimp-condition} holds for any interval $[n':n] \subset [t_s:n]$ such that both $a$ and $a'$ are active on $[n':n]$. 

With a similar argument, we can show that if \eqref{eq:ghat} holds for $a,a'\in\GOOD_n$, then
\begin{align}
\label{eq:elim-condition}
\widehat{\Delta}_{n':n}(a',a) > 12 C_{n',n} \left(\sqrt{P_{n':n}(a)} \vee C_{n',n} \right)
\end{align}
implies that $\Delta_{n':n}(a',a)>0$. The indicator $\better{a'}{a}{n}$ is true if \eqref{eq:elim-condition} holds for any interval $[n':n] \subset [t_s:n]$ such that both $a$ and $a'$ are in the \GOOD set.

\subsection{Regret of \algoname}

The following theorem shows that the regret of \algoname behaves as desired.
\begin{theorem}
\label{thm:regret}
For a switching multi-armed bandit problem with $K \ge 2$ arms, horizon $N\ge 2$, and $S \ge 0$ changes in the identity of the optimal arm, the expected regret of \algoname run with $\delta=1/N$ can be bounded as
$\E[R_N] \leq \CONST\sqrt{K(S+1)N}\log^2(KN)$ for an appropriate universal constant $\CONST$.
\end{theorem}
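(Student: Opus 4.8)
The plan is to condition on a high-probability \emph{good event} $\cG$ on which the concentration bounds \eqref{eq:ghat} and \eqref{eq:gphat} hold simultaneously for every arm and every subinterval of $[N]$. By Lemma~\ref{lemma:hoeff} this event has probability at least $1-\delta=1-1/N$, and on $\overline\cG$ the regret is at most $N$, so the failure contributes at most $\delta N=O(1)$ to $\E[R_N]$; all remaining arguments are carried out on $\cG$. The first structural fact I would record is the \emph{soundness} of the elimination rule. By the implications established around \eqref{eq:elimp-condition} and \eqref{eq:elim-condition} (note that for a \GOOD--\GOOD comparison the two selection probabilities $P_t(a)=P_t(a')$ coincide, so the sign of $\Delta_{n':n}(a',a)$ is governed purely by the gaps, while the second-type quantity $\widetilde\Delta$ always uses the uniform weight $1/K$), whenever $\better{a'}{a}{n}$ fires on $\cG$ we genuinely have a witnessing interval on which $a$ was strictly worse than $a'$. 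Consequently, an arm that is optimal throughout its entire active period is never eliminated, and in particular the optimal arm of a segment stays in \GOOD for as long as that segment lasts.

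From soundness I would deduce that the number of episodes is at most $S+1$. Every episode except the last terminates because \GOOD becomes empty, which forces the arm $a^*_{t_s}$ that was optimal at the episode's start to be eliminated; by soundness this can happen only if $a^*_{t_s}$ was beaten at some step of the episode, i.e.\ only if the identity of the optimal arm changed at least once \emph{within} that episode. Since the episodes partition $[N]$ and each change lies in exactly one episode, at most $S$ episodes terminate, so there are at most $S+1$ episodes, whose lengths $N_s$ satisfy $\sum_s N_s=N$.

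The core of the argument is to bound the regret inside a single episode and to split each suboptimal step into three categories: (a) ongoing phased elimination among currently plausible \GOOD arms, (b) scheduled exploration of \BAD arms, and (c) the waiting and confirmation delay triggered by an optimal-arm change. For (a), the first-type intervals \eqref{eq:elim-condition} give the usual guarantee that a \GOOD arm of gap $\Delta$ survives only until its confidence width falls below $\Delta$, so the gap-weighted plays telescope, and together with the budget constraint $\sum(\text{plays})\le N_s$ this yields the standard $\widetilde O(\sqrt{K\cdot\text{length}})$ bound per segment. For (b) and (c), the crucial design features are that the per-step scheduling probability $\varepsilon\sqrt{s/(KN)}$ \emph{grows} with the episode index $s$, and that an obligation is cut short (line~\ref{ln:explo-zero}) once the explored arm is shown suboptimal after about $C_{n',n}^2/\Delta^2$ effective pulls, so that the per-obligation regret equals the obligation length times the relevant gap. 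I would then combine the pieces through two different summations: the waiting-time regret for detecting a change turns out to be gap-independent and of order $\sqrt{KN/s}$, so summing over episodes gives $\sum_s\sqrt{KN/s}=\widetilde O(\sqrt{KNS})$ (this is exactly where the increasing exploration rate pays off), while the confirmation and elimination contributions, which scale like a gap sum balanced against segment length, are controlled by a Cauchy--Schwarz step of the form $\sum_s\sqrt{K N_s}\le\sqrt{K(S+1)\sum_s N_s}=\sqrt{K(S+1)N}$. The logarithmic factors from $C_{n',n}^2=\widetilde O(\log(KN))$ and the grid size $|\cB|=O(\log N)$ then combine into the stated $\log^2(KN)$.

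The main obstacle is precisely the point flagged in the introduction: bounding the detection delay when the \emph{optimal} arm sits in the \BAD set, with the correct $\sqrt K$ dependence. Here one must show that once the optimal arm becomes good again, the fixed $\tfrac1K$ sampling probability assigned to active \BAD arms, together with the second-type interval \eqref{eq:gphat} whose width $\sim C_{n',n}\sqrt{(n-n'+1)/K}$ does \emph{not} depend on the fluctuating number of active arms, guarantees detection within $\widetilde O(K/\Delta^2)$ active steps and hence triggers either elimination of the stale \GOOD arms or a restart. Controlling this detection time uniformly over all intervals and over the arbitrarily varying active set $\cA$ --- and making sure, via the \text{Active} bookkeeping, that a detecting obligation keeps the arm active over an interval lying inside the new segment rather than one straddling the change --- is the delicate martingale and union-bound step that the non-uniform exploration design is engineered to make possible; getting it wrong is exactly what would degrade the bound below the optimal $\sqrt K$ scaling.
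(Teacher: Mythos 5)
Your overall architecture matches the paper's: condition on the concentration event (cost $O(1)$ on its complement), use soundness of the elimination rule to get at most $S+1$ episodes (this is exactly Lemma~\ref{lem:episodes}), and split the regret into the exploration of \BAD arms, the phased elimination among \GOOD arms, and the regret of the surviving arm $a_s^g$ against the true optimum. The first two pieces are handled essentially as in Lemmas~\ref{lem:R1} and~\ref{lem:R2}, and your accounting of the logarithmic factors and the final Cauchy--Schwarz over episodes is right.

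The genuine gap is in the third piece, which you correctly identify as the crux but then resolve by assertion. You claim that ``the waiting-time regret for detecting a change turns out to be gap-independent and of order $\sqrt{KN/s}$'' and that detection occurs ``within $\widetilde O(K/\Delta^2)$ active steps.'' Neither statement is proved, and the second is not even well-posed in this model: within a segment only the \emph{identity} of the optimal arm is fixed, while the gaps $c_t(a)=g_t(a_t^*)-g_t(a)$ may fluctuate arbitrarily, so there is no single $\Delta$ to plug into a detection-time formula. The paper's proof replaces this with the construction of \emph{candidate intervals} $J_1,J_2,\ldots$ defined by the cumulative condition \eqref{eq:candidate} $\sum_t c_t(a) > 24C_N(\sqrt{K(n'-n'')}\vee KC_N)$, which bounds the regret on each $J_k$ by $25C_N^2\sqrt{K|J_k|}$ regardless of how the gaps vary. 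It then must show that the probability of \emph{no} elimination event across the first $k$ candidate intervals decays like $\exp(-\tfrac18\sqrt{s/N}\sum_j\sqrt{|J_j|})$ — this needs the product bound \eqref{eq:elim-prob}, the elementary inequality of Lemma~\ref{lem:sumsq2} with a separate treatment of the $O(\log N)$ ``doubling'' intervals $\cJ^d$, and finally a backwards induction over segments (the recursion \eqref{eq:R3-2-episode} closed by Lemma~\ref{lem:expalpha}) to trade the accumulated $\sqrt{K|J_k|}$ regret against the shrinking survival probability and arrive at the per-episode bound $8D_N\sqrt{N/s}$. This balancing argument is where the $\sqrt{KN/s}$ actually comes from; without it your proof of the theorem is incomplete at precisely the step that distinguishes this result from prior work.
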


The rest of the paper (Section~\ref{sec:proof}) is devoted to the proof of this theorem. In what follows, we will use $\CONST$ to denote a constant whose value is independent of $K,S,N$, but might change from line to line. 

\section{Analysis}
\label{sec:proof}

We start with a few useful lemmas then analyze the regret in Section~\ref{sec:regretproof}.

\subsection{Useful lemmas}

Fix $\delta \in (0,1)$. We define events $\cE_1$, $\cE_2$ under which the estimates
$\widehat G_{n',n}(a)$, $\widehat{\widetilde G}_{n',n}(a)$ are good:
\begin{align*}
\cE_1 & = \Bigg\{
\left|\widehat G_{n':n}(a) - G_{n':n}(a) \right| \le 6 C_{n',n} \left(\sqrt{P_{n':n}(a)} \vee C_{n',n} \right) \text{ for all $[n':n] \subseteq [N]$ and actions $a \in [K]$}
\Bigg\}\,,\\
\cE_2 & = \Bigg\{\left|\widehat{\widetilde G}_{n':n}(a) - \widetilde G_{n':n}(a) \right| \le 6 C_{n',n} \left(\sqrt{\frac{n-n'+1}{K}} \vee C_{n',n} \right) \text{ for all $[n':n] \subseteq [N]$ and actions $a \in [K]$}
\Bigg\}\,.
\end{align*}
To help the analysis, it will also be useful to consider the following quantities for any $n' \le n$ and arms $a,a' \in [K]$:
\begin{align*}
G_{n':n}(a,a'):= \sum_{t=n'}^n P_t(a) g_t(a')\;, \qquad G'_{n':n}(a,a') := \sum_{t=n'}^n \one\{A_t=a\}g_t(a')\;,
\end{align*}
\begin{align*}
\widetilde{\Delta'}_{n':n}(a,a') := \sum_{t=n'}^n \one\{\widetilde A_t=a\}\big(g_t(a)-g_t(a')\big)\;.
\end{align*}
Note that by taking conditional expectations it follows that  $G'_{n':n}(a,a')$ should be close to $G_{n':n}(a,a')$ and $\widetilde{\Delta'}_{n':n}(a,a')$ should be close to $\widetilde{\Delta}_{n':n}(a,a')$. 
Defining $C'_{n',n}=\sqrt{\log\left(\frac{2K^2 N^2(\log(n-n'+1)+2)}{\delta}\right)}$ (which satisfies $C'_{n',n}\le 2 C_{n',n}$), these are formalized in the following events:
\begin{align*}
\cE_3 & = \Bigg\{\left|G'_{n':n}(a,a') - G_{n':n}(a,a') \right| \le 5 C'_{n',n} \left(\sqrt{P_{n':n}(a)} \vee C'_{n',n} \right) \text{ for all $[n':n] \subseteq [N]$ and actions $a,a' \in [K]$}
\Bigg\}\,,\\
\cE_4 & = \Bigg\{\left|\widetilde{\Delta'}_{n':n}(a,a') - \widetilde \Delta_{n':n}(a,a') \right| \le 5 C'_{n',n} \left(\sqrt{\frac{n-n'+1}{K}} \vee C'_{n',n} \right) \text{ for all $[n':n] \subseteq [N]$ and actions $a,a' \in [K]$}
\Bigg\}\,.
\end{align*}

The next lemma shows that $\cE_1$, $\cE_2$, $\cE_3$, and $\cE_4$ hold with high probability. Its proof, presented in Appendix~\ref{sec:proof_lemmas}, is based on a version of Freedman's inequality.
\begin{lemma}\label{lemma:hoeff}
Each of the events $\cE_1$, $\cE_2$, $\cE_3$, and $\cE_4$ hold with probability at least $1-\delta$.
\end{lemma}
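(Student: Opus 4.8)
The plan is to write each of the four deviations as a sum of bounded martingale differences and apply a Bernstein/Freedman-type inequality, handling the data-dependent conditional variance by peeling and paying for all intervals, actions and variance levels through a single union bound calibrated exactly to $C_{n',n}$ and $C'_{n',n}$. Fix an interval $[n':n]$ and the relevant arm(s), and set $Y_t := X_t - \E[X_t\mid\cH_t]$, where $X_t=\one\{A_t=a\}r_t$ for $\cE_1$, $X_t=\one\{\widetilde A_t=a\}r_t$ for $\cE_2$, $X_t=\one\{A_t=a\}g_t(a')$ for $\cE_3$, and $X_t=\one\{\widetilde A_t=a\}(g_t(a)-g_t(a'))$ for $\cE_4$. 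Since $P_t$ and $g_t$ are determined by $\cH_t$, each $(Y_t)$ is a martingale difference sequence for the natural filtration in which $\cH_t$ precedes the draw of $(A_t,r_t)$: for $\cE_1,\cE_3$ we have $\E[X_t\mid\cH_t]=P_t(a)g_t(a)$ resp.\ $P_t(a)g_t(a')$ for every $a$ (both sides vanishing when $P_t(a)=0$), whereas for $\cE_2,\cE_4$ the identity $\E[X_t\mid\cH_t]=g_t(a)/K$ resp.\ $(g_t(a)-g_t(a'))/K$ uses $\P(\widetilde A_t=a\mid\cH_t)=1/K$, which holds exactly at times when $a$ is active --- precisely the intervals on which these estimators enter the comparison test. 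Two facts then drive the argument: the increments are bounded, $|Y_t|\le 1$, and the predictable quadratic variation matches the quantity under the square root, namely $\sum_t\E[Y_t^2\mid\cH_t]\le P_{n':n}(a)$ for $\cE_1,\cE_3$ (as $\E[X_t^2\mid\cH_t]\le P_t(a)$) and $\le (n-n'+1)/K$ for $\cE_2,\cE_4$ (as $\E[X_t^2\mid\cH_t]\le 1/K$).

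Next I would apply Freedman's inequality in the form $\P\!\left(\sum_t Y_t\ge\sqrt{2v\lambda}+\lambda/3,\ \sum_t\E[Y_t^2\mid\cH_t]\le v\right)\le e^{-\lambda}$. Because the conditional variance is itself random, I would peel over geometric levels of $v$ (ratio a small constant, with a floor at $C_{n',n}^2$), partitioning the range $[0,n-n'+1]$ into at most $\log(n-n'+1)+2$ buckets and applying Freedman on each with $\lambda=C_{n',n}^2$. On a bucket where the realized variance $V$ lies in $[v/e,v]$ one has $v\le eV$, so $\sqrt{2v\lambda}\le\sqrt{2e}\,C_{n',n}\sqrt V$, while $\lambda/3=C_{n',n}^2/3\le C_{n',n}\sqrt V/3$ above the floor; on the floor bucket $V\le C_{n',n}^2$ and the deviation is a small multiple of $C_{n',n}^2$. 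In both regimes the deviation is below $6\,C_{n',n}(\sqrt V\vee C_{n',n})$, which is \eqref{eq:ghat}/\eqref{eq:gphat}; the identical computation run with $C'_{n',n}$ gives the constant $5$ for $\cE_3,\cE_4$ (where the only randomness is in the action indicator, not in an extra reward term).

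Finally I would collect the union bound. Setting the per-test failure probability to $\delta'_{n',n}=\delta/\big(2KN^2(\log(n-n'+1)+2)\big)$ makes $\lambda=\log(1/\delta'_{n',n})=C_{n',n}^2$ exactly as defined; summing over the two signs, the at most $\log(n-n'+1)+2$ variance levels, the $K$ actions and the at most $N^2$ intervals $[n':n]\subseteq[N]$ cancels the logarithmic factor and leaves total failure probability at most $\delta$, giving $\P(\cE_1),\P(\cE_2)\ge 1-\delta$. For $\cE_3,\cE_4$ the only change is that the union runs over the $K^2$ ordered pairs $(a,a')$, which is exactly why $C'_{n',n}$ carries $K^2$ in place of $K$ (and satisfies $C'_{n',n}\le 2C_{n',n}$). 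I expect the main obstacle to be the peeling step: collapsing the random-variance Freedman bound into the clean closed form $c\,C(\sqrt V\vee C)$ with the advertised constants while keeping the number of levels small enough that the union bound over all $O(N^2)$ intervals reproduces exactly the logarithmic terms baked into $C_{n',n}$ and $C'_{n',n}$.
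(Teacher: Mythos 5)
Your proposal is correct and follows essentially the same route as the paper: the same martingale-difference decompositions (with the same conditional-variance bounds $P_t(a)$ and $1/K$, and the same caveat that the $\widetilde A_t$-based estimators are only unbiased while the arm is active), a Freedman-type inequality made adaptive to the random predictable variance, and the same union-bound accounting over the two signs, the $\log(n-n'+1)+2$ levels, the $K$ (or $K^2$) arms, and the $N^2$ intervals that produces $C_{n',n}$ and $C'_{n',n}$. The only difference is cosmetic: the paper achieves adaptivity by gridding the free parameter $\eta$ in the \citet{AHKLL-2014} form of Freedman's inequality (Corollary~\ref{cor:freedman}), whereas you peel over geometric variance levels $v$ --- these are dual mechanisms costing the identical logarithmic factor.
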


We bound the regret of \algoname under the event $\cE_1\cap\cE_2\cap\cE_3\cap\cE_4$. 
Taking into account that the regret in every time step is at most 1 and choosing $\delta=1/N$, the contribution to the expected regret when $\cE_1$, $\cE_2$, $\cE_3$ or $\cE_4$ do not hold can be bounded by 4. 
For arms $a,a'$, note that since
\begin{align*}
    \widehat {\Delta}_{n':n}(a',a) &= \Big(\widehat G_{n':n}(a') - G_{n':n}(a') \Big)
    +  \Big(G_{n':n}(a') - G_{n':n}(a) \Big)
    + \Big(G_{n':n}(a) - \widehat G_{n':n}(a)\Big)\,,
\end{align*}
$\cE_1$ implies
\begin{align*}
    \left|\widehat {\Delta}_{n':n}(a',a)   -  
\Delta_{n':n}(a',a)\right| &\le 6 C_{n',n} \left(\sqrt{P_{n':n}(a)} \vee C_{n',n} + \sqrt{P_{n':n}(a')} \vee C_{n',n}\right)\;.  
\end{align*}
In particular, if both $a$ and $a'$ are \GOOD throughout the interval $[n':n]$, then $P_t(a)=P_t(a')$ for all $t \in [n':n]$, and hence 
\begin{align}
    \left|\widehat {\Delta}_{n':n}(a',a)   -  
\Delta_{n':n}(a',a)\right| &\le 12 C_{n',n} \left(\sqrt{P_{n':n}(a)} \vee C_{n',n}\right)\;.  \label{eq:eqpDelta}
\end{align}
Similarly, $\cE_2$ implies, for any $a,a' \in [K]$,
\begin{align}
    \left|\widehat {\widetilde\Delta}_{n':n}(a',a)   -  
\widetilde\Delta_{n':n}(a',a)\right| &\le 12 C_{n',n} \left(\sqrt{\frac{n-n'+1}{K}} \vee C_{n',n} \right)\;.  \label{eq:eqDelta}
\end{align}

\begin{lemma}
\label{lem:activeArms}
Let $n' \le n$ be two time steps belonging to the same episode.
\begin{enumerate}[(i)]
\item Suppose $\cE_1$ holds. If $\Delta_{n':n}(a',a) > 24 C_{n',n} \left(\sqrt{P_{n':n}} \vee C_{n',n} \right)$ for $a,a' \in\GOOD_n$, then $\better{a'}{a}{n}$
is true. Furthermore, if $a,a'\in\GOOD_{n+1}$, then $\better{a'}{a}{n}$ is false and
$\Delta_{n':n}(a',a) \le 24 C_{n',n} \left(\sqrt{P_{n':n}(a)} \vee C_{n',n} \right)$. 
\item Assume $\cE_2$ holds and arms $a, a'$ are active on $[n':n]$. If $\widetilde \Delta_{n':n}(a',a) > 24 C_{n',n} \left(\sqrt{\frac{n-n'+1}{K}} \vee C_{n',n} \right)$, then $\better{a'}{a}{n}$
is true. Furthermore, if $a'\in\GOOD_{n+1}$ and either $a\in \GOOD_{n+1}$ or $a \in \BAD_{n+1} \cap \cA_{n+1}$ with no new exploration obligation scheduled for $a$ in time step $n+1$, then $\better{a'}{a}{n}$ is false and
$\widetilde \Delta_{n':n}(a',a) \le 24 C_{n',n} \left(\sqrt{\frac{n-n'+1}{K}} \vee C_{n',n} \right)$.
\end{enumerate}
\end{lemma}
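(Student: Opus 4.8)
The plan is to prove both parts by the same two-pronged template, reading the two firing conditions of the $\better{a'}{a}{n}$ subroutine (Algorithm~\ref{alg:elim}) against the concentration bounds \eqref{eq:eqpDelta} and \eqref{eq:eqDelta}. The direction establishing that a large gap makes $\better{a'}{a}{n}$ true is obtained by exhibiting the single index $n'$ at which the relevant firing condition holds. The reverse direction, establishing that a non-eliminated $a$ forces $\better{a'}{a}{n}$ to be false together with a small gap, is the contrapositive: if $\better{a'}{a}{n}$ were true then $a$ would be removed from the active set at step $n$ (lines~\ref{ln:a-to-bad}--\ref{ln:explo-zero}), so non-removal forces $\better{a'}{a}{n}$ false; the firing condition that failed at the same index $n'$, combined with concentration in the reverse direction, then yields the gap bound.

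For part (i) I would first note that $a,a'\in\GOOD_n$ forces both arms to be \GOOD throughout $[n':n]$: since $n',n$ lie in one episode we have $n'\ge t_s$, and \GOOD membership only shrinks within an episode, so $\GOOD_n\subseteq\GOOD_t$ for all $t\in[n':n]$. Consequently $P_t(a)=P_t(a')$ on the interval and \eqref{eq:eqpDelta} applies. If $\Delta_{n':n}(a',a)>24 C_{n',n}(\sqrt{P_{n':n}(a)}\vee C_{n',n})$, then \eqref{eq:eqpDelta} gives $\widehat\Delta_{n':n}(a',a)>12 C_{n',n}(\sqrt{P_{n':n}(a)}\vee C_{n',n})$, which is exactly the second firing condition of Algorithm~\ref{alg:elim} at index $n'$ (both arms are \GOOD), so $\better{a'}{a}{n}$ is true. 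For the converse, if $a,a'\in\GOOD_{n+1}$ then $a$ was not moved to \BAD during the step-$n$ elimination loop, and since a true $\better{a'}{a}{n}$ with $a\in\GOOD$ would trigger line~\ref{ln:a-to-bad}, it must be that $\better{a'}{a}{n}$ is false. Because $\GOOD_{n+1}\subseteq\GOOD_n$, the second firing condition was evaluated and failed at every scanned index, in particular at $n'$, so $\widehat\Delta_{n':n}(a',a)\le 12 C_{n',n}(\sqrt{P_{n':n}(a)}\vee C_{n',n})$; applying \eqref{eq:eqpDelta} once more gives the stated bound on $\Delta_{n':n}(a',a)$.

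Part (ii) follows the same pattern but uses the first firing condition together with \eqref{eq:eqDelta}, which holds for arbitrary pairs $a,a'$. The hypothesis that $a,a'$ are active on $[n':n]$ means $\text{Active}_n(a),\text{Active}_n(a')\le n'$, placing $n'$ inside the scan range $[\max\{\text{Active}_n(a),\text{Active}_n(a')\},n]$ of Algorithm~\ref{alg:elim}; the large-gap implication then follows by firing the first condition at $n'$ exactly as above. For the converse I must show $\better{a'}{a}{n}$ is false from the non-elimination hypothesis on $a$. When $a\in\GOOD_{n+1}$ this is immediate as in part (i). The delicate case is $a\in\BAD_{n+1}\cap\cA_{n+1}$ with no new exploration obligation scheduled at step $n+1$.

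This last case is the main obstacle, and it is settled by the exploration-obligation bookkeeping rather than by concentration. The key observation is that a true $\better{a'}{a}{n}$ would zero $a$'s obligation at step $n$ --- via line~\ref{ln:explo-zero} if $a$ is already in \BAD, or via line~\ref{ln:a-to-bad} (which sets $\B(a)=0$ when moving an arm from \GOOD to \BAD) if $a$ is eliminated from \GOOD. With no new obligation scheduled at step $n+1$, $\B(a)$ would remain $0<\tfrac{1}{K}$, so $a$ could not satisfy the activation threshold in line~\ref{ln:activeset} and hence $a\notin\cA_{n+1}$, contradicting the hypothesis $a\in\cA_{n+1}$. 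Therefore $\better{a'}{a}{n}$ is false, the first firing condition failed at $n'$, i.e.\ $\widehat{\widetilde\Delta}_{n':n}(a',a)\le 12 C_{n',n}(\sqrt{(n-n'+1)/K}\vee C_{n',n})$, and \eqref{eq:eqDelta} turns this into the claimed bound on $\widetilde\Delta_{n':n}(a',a)$. The only real care needed is in matching the semantics of the Active variable and the $1/K$ activation threshold to the disjunctive structure of the subroutine; all probabilistic content is carried by $\cE_1$ and $\cE_2$ through \eqref{eq:eqpDelta}--\eqref{eq:eqDelta}.
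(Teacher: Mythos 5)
Your proof is correct and follows essentially the same route as the paper's: the forward implications come from combining the gap hypothesis with the concentration bounds \eqref{eq:eqpDelta} and \eqref{eq:eqDelta} to show the corresponding firing condition of Algorithm~\ref{alg:elim} holds at index $n'$, and the converse implications come from observing that survival of $a$ in the active set forces $\better{a'}{a}{n}$ to be false, hence the firing condition failed at $n'$, and applying concentration in the other direction. The only difference is that you spell out the bookkeeping for the case $a \in \BAD_{n+1}\cap\cA_{n+1}$ (a true $\bettername$ would zero $\B(a)$, so without a newly scheduled obligation $a$ could not re-enter $\cA_{n+1}$), which the paper compresses into ``Part (ii) can be shown similarly.''
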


\subsection{Preliminaries to the proof of Theorem~\ref{thm:regret}}
\label{sec:regretproof}

We bound the regret of \algoname under the event $\cE_1\cap\cE_2\cap\cE_3\cap\cE_4$.
We first show that the number of episodes produced by our algorithm is at most $S+1$.

\begin{lemma}
\label{lem:episodes}
If $\cE_1\cap\cE_2$ holds, \algoname has at most $S+1$ episodes.
\end{lemma}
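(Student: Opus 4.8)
The plan is to show that each time a new episode is started, the identity of the optimal arm must have changed at least once during the previous episode, so that the number of episodes is controlled by $S$. A new episode begins precisely when the $\GOOD$ set becomes empty, i.e., when every arm has been eliminated from $\GOOD$ at some point during the episode. The key claim I would establish is: under $\cE_1 \cap \cE_2$, if all arms get eliminated from $\GOOD$ within a single episode spanning $[t_s : e]$, then there is no single arm that is optimal for the entire interval $[t_s : e]$; equivalently, the optimal-arm identity $a_n^*$ changes at least once within $[t_s : e]$.

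To prove this claim I would argue by contradiction. Suppose some fixed arm $a^\star$ is optimal at every time step of the episode, i.e., $g_n(a^\star) \ge g_n(a)$ for all $a$ and all $n \in [t_s : e]$. The crucial soundness property is that eliminations never remove a genuinely optimal arm: I want to show $a^\star$ is never moved out of $\GOOD$ during the episode. Arm $a^\star$ is eliminated only if $\better{a'}{a^\star}{n}$ is true for some $a'$ and some $n$, which requires either \eqref{eq:elimp-condition} or \eqref{eq:elim-condition} to hold on some subinterval $[n':n]$ on which both arms are active (or both in $\GOOD$). For the $\widetilde\Delta$-based condition, \eqref{eq:gphat} (guaranteed by $\cE_2$) gives that \eqref{eq:elimp-condition} implies $\widetilde\Delta_{n':n}(a', a^\star) > 0$, i.e., $\sum_{t \in [n':n]} (g_t(a') - g_t(a^\star)) > 0$; but since $a^\star$ is optimal throughout, $g_t(a^\star) \ge g_t(a')$ for every $t$, so $\widetilde\Delta_{n':n}(a', a^\star) \le 0$, a contradiction. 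The analogous argument using $\cE_1$ and \eqref{eq:ghat} rules out elimination via the $\Delta$-based condition \eqref{eq:elim-condition} when both arms are $\GOOD$, since $\Delta_{n':n}(a', a^\star) = \sum_t P_t(a')g_t(a') - P_t(a^\star)g_t(a^\star)$ and the weighting is identical ($P_t(a') = P_t(a^\star)$) when both are $\GOOD$, so optimality of $a^\star$ again forces $\Delta_{n':n}(a', a^\star) \le 0$. Hence $a^\star$ survives in $\GOOD$ to the end of the episode, contradicting that $\GOOD$ became empty. Therefore no arm can be optimal for the whole episode, so $a_n^*$ must change at least once strictly inside the episode.

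Given the claim, I would finish the counting argument. Let there be $E$ episodes in total, with start times $t_1 < t_2 < \cdots < t_E$ and the last episode ending at $N$. Each completed episode (all but possibly the final one) ends with $\GOOD$ empty, so by the claim the optimal arm changes at least once within each such episode. The episodes partition the horizon $[1:N]$ into disjoint consecutive intervals, so their internal changes are distinct and each contributes at least one to $S = S(a_1^*, \ldots, a_N^*)$. With $E$ episodes there are $E-1$ completed episodes that each force at least one change, giving $S \ge E - 1$, i.e., $E \le S + 1$, which is exactly the bound.

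The main obstacle is handling the elimination conditions correctly and carefully in the definition of ``active,'' since the $\widetilde\Delta$-based elimination applies on intervals where both arms are merely active (not necessarily both in $\GOOD$), whereas the $\Delta$-based elimination applies only when both are in $\GOOD$. I must verify that in either case the directional inequality $g_t(a^\star) \ge g_t(a')$ for all $t$ in the relevant subinterval suffices to contradict the elimination trigger; this is immediate for the $\widetilde\Delta$ condition (uniform $1/K$ weighting) but requires the observation that the $P_t$-weights coincide for two $\GOOD$ arms in the $\Delta$ condition. A secondary subtlety is confirming that $a^\star$, being optimal, never leaves $\GOOD$ even transiently — I should check that the only route out of $\GOOD$ is via a successful $\better{\cdot}{a^\star}{\cdot}$ trigger, which is ruled out under $\cE_1 \cap \cE_2$, so the survival argument is airtight.
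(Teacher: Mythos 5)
Your proposal is correct and follows essentially the same route as the paper's proof: under $\cE_1\cap\cE_2$ the elimination tests are sound (neither \eqref{eq:elimp-condition} nor \eqref{eq:elim-condition} can fire against an arm that is optimal throughout the relevant interval, the latter because $P_t$-weights coincide for \GOOD arms), so an arm optimal for the whole episode would survive in \GOOD, and hence each completed episode must contain at least one change of the optimal arm, giving at most $S+1$ episodes. The paper states this more tersely, but the underlying argument is identical.
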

\begin{proof}
At the beginning of each episode $s$, the set $\GOOD_{t_s}$ includes all arms including the current optimal arm. Under the event $\cE_1\cap\cE_2$, the arm that is optimal at the start of the episode cannot be eliminated from the GOOD set for as long as it stays optimal. Thus, the GOOD set can only become empty after a change in the optimal arm, which happens $S$ times.
\end{proof}

In what follows, let $a_s^g$ denote an arm that stays in the \GOOD set during the entire episode $s$ (i.e. the last arm to get eliminated, or one of the last arms if multiple arms are eliminated when \GOOD becomes empty).
Let $\cN_s$ denote the set of time steps corresponding to episode $s$. We decompose the regret in each episode $s$ into the regret of played arms with respect to $a_s^g$, and the regret of $a_s^g$ with respect to the optimal arms as follows:
\begin{align*}
R_N &= \sum_{n=1}^N (g_n(a_n^*) - g_n(A_n)) 
 = R_N^{(1)} + R_N^{(2)} + R_N^{(3)}\,,
\end{align*}
where we define
\begin{align*}
R_N^{(1)} &:= \sum_{s=1}^S \sum_{n\in \cN_s} \one\{A_n\in \BAD_n\}(g_n(a_s^g) - g_n(A_n)) \,, \\
R_N^{(2)} &:= \sum_{s=1}^S \sum_{n\in \cN_s} \one\{A_n\in \GOOD_n\}(g_n(a_s^g) - g_n(A_n)) \,, \\
R_N^{(3)} &:= \sum_{s=1}^S \sum_{n\in \cN_s} (g_n(a_n^*) - g_n(a_s^g)) \;.
\end{align*}

We proceed by bounding each of the above regret terms. 
Bounding  $R_N^{(1)}$ and $R_N^{(2)}$ can be done using arguments similar to existing techniques of~\citet{AGO-2019}; this is presented in Section~\ref{sec:R1R2}, in Lemma~\ref{lem:R1} and Lemma~\ref{lem:R2}, respectively.

The main challenge is bounding $R_N^{(3)}$, for the following reason. Choosing the sampling distribution $P_n$ to be uniform over the active set, which is similar to the round robin action selection of~\citet{AGO-2019}, would lead to a suboptimal bound in $K$: when the optimal arm is fixed, in the \BAD set, and active on interval $[n':n]$, the condition for eliminating $a_s^g$ (and starting a new episode) is of the form
\[
\sum_{t=n'}^{n} \frac{1}{|\cA_t|} (g_t(a_t^*) - g_t(a_s^g)) \approx \sum_{t=n'}^{n} (\widehat G_t(a_t^*) - \widehat G_t(a_s^g)) \gtrapprox 12 C_N \sqrt{\sum_{t=n'}^{n} \frac{1}{|\cA_t|}}  \;,
\]
where $C_N:=C_{1,N}$ which equals $\sqrt{\log\big(2KN^3(\log(N)+2)\big)}$ for $\delta=1/N$. 
The problem here is that the right-hand side can be as large as $12 C_N \sqrt{n-n'+1}$ or as small as $12 C_N \sqrt{\frac{n-n'+1}{K}}$, depending on the size of the active sets $\cA_t$. The active sets, however, evolve in a complex manner, which makes it difficult to have a tight bound on $\sum_{t=n'}^{n} \frac{1}{|\cA_t|}$. In order to obtain tight bounds, we consider a non-uniform sampling distribution that assigns a fixed probability of $\frac{1}{K}$ to any active \BAD arm (including the optimal arm in the above scenario). Then the elimination condition takes the form of
\[
\sum_{t=n'}^{n} (g_t(a_t^*) - g_t(a_s^g)) \gtrapprox 12 C_N \sqrt{K(n-n'+1)}  \;.
\]
The interval $[n':n]$ is explored by the optimal arm with probability roughly $\sqrt{\frac{s}{N (n-n'+1)}}$, and this event triggers a restart; otherwise the learner suffers an $\widetilde O(C_N\sqrt{K (n-n'+1)})$ regret for this interval, which all together, as we will see in Section~\ref{sec:R3}, lead to an optimal (up to logarithmic factors) $\widetilde O(\sqrt{KSN})$ regret bound (note that $C_N = \widetilde O(1)$).
This is proved formally in Lemma~\ref{lem:R3}. Combining Lemmas~\ref{lem:R1}--\ref{lem:R3} trivially yields Theorem~\ref{thm:regret};
the rest of the paper is devoted to prove these lemmas.

\subsection{Bounding $R_N^{(1)}$ and $R_N^{(2)}$}
\label{sec:R1R2}

We start with bounding $\E[R_N^{(1)}]$.
\begin{lemma}
\label{lem:R1}
Under the conditions of Theorem~\ref{thm:regret}, if $N\ge 8$, we have
\[
\E[R_N^{(1)}] \le 34 C_N^2 \log_2(N) \left(\sqrt{K(S+1)N} + 2\sqrt{K}\right)+2~.
\]
\end{lemma}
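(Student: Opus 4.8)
The plan is to view $R_N^{(1)}$ as the cost of exploring \BAD arms and to charge this cost to the individual exploration obligations scheduled by the algorithm. The key starting observation is that when a \BAD arm $a$ is played it is always active and sampled with probability exactly $\tfrac1K$, so $\one\{A_t=a\}=\one\{\widetilde A_t=a\}$; hence the realized contribution of $a$ to $R_N^{(1)}$ over any interval $[n':n]$ on which $a$ is continuously active equals $-\widetilde{\Delta'}_{n':n}(a,a_s^g)$. Using $\cE_4$ together with the antisymmetry $\widetilde\Delta_{n':n}(a,a_s^g)=-\widetilde\Delta_{n':n}(a_s^g,a)$, this realized regret is at most $\widetilde\Delta_{n':n}(a_s^g,a)+5C'_{n',n}(\sqrt{(n-n'+1)/K}\vee C'_{n',n})$, which reduces the problem to bounding the expected weighted gap $\widetilde\Delta_{n':n}(a_s^g,a)$ plus a concentration slack.

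First I would bound the regret accumulated during a single \emph{maximal active period} of a \BAD arm $a$, i.e.\ a maximal interval $[n':n]$ on which $a\in\cA\cap\BAD$. Throughout episode $s$ the arm $a_s^g$ stays in \GOOD, so it is active on all of $[n':n]$ and the comparison $\better{a_s^g}{a}{n}$ is legitimate. If the period ends because the obligation is exhausted rather than by elimination, then $\better{a_s^g}{a}{n}$ is false, so condition~\eqref{eq:elimp-condition} fails on $[n':n]$, and with $\cE_2$ (i.e.\ \eqref{eq:eqDelta}) this gives $\widetilde\Delta_{n':n}(a_s^g,a)\le 24\,C_{n',n}(\sqrt{(n-n'+1)/K}\vee C_{n',n})$, exactly as in the second part of Lemma~\ref{lem:activeArms}(ii); if the period ends by elimination I apply the same bound one step earlier and absorb the last play into an additive $O(1)$. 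Combining with the $\cE_4$ slack, the period regret is at most $34\,C_{n',n}(\sqrt{\ell/K}\vee C_{n',n})$ with $\ell=n-n'+1$. The only remaining quantity is $\ell$: since every step of the period decreases $\B(a)$ by $\tfrac1K$ (line~\ref{ln:S-update}) while $\B(a)$ is increased only by the events $\EXP(a,\cdot,\varepsilon)$, each by at most $1/\varepsilon^2$, the total decrease $\ell/K$ is at most the total increase, so $\ell\le K\sum 1/\varepsilon^2$ over the $\EXP$ events in the period. Hence $\sqrt{\ell/K}\le\sqrt{\sum 1/\varepsilon^2}\le\sum 1/\varepsilon$, and the period regret is at most $34\,C_N\sum_{\text{events}} 1/\varepsilon+34\,C_N^2$, i.e.\ it can be charged as $O(C_N/\varepsilon)$ to each $\EXP(a,\cdot,\varepsilon)$ event plus one $O(C_N^2)$ term per period.

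It then remains to sum the per-event charge $O(C_N/\varepsilon+C_N^2)$ over all exploration events. Since the conditional probability of $\EXP(a,n,\varepsilon)$ given the history is at most $\varepsilon\sqrt{s/(KN)}$ when $n$ lies in episode $s$, we have $\E[\#\{n\in\cN_s:\EXP(a,n,\varepsilon)\}]=\E[\sum_{n}\P(\EXP(a,n,\varepsilon)\mid\cH_n)]$, and using the pathwise bounds $s\le S+1$ (at most $S+1$ episodes, by Lemma~\ref{lem:episodes}) and $\sum_s|\cN_s|=N$, the contribution summed over episodes is at most $\varepsilon\sqrt{(S+1)N/K}$. Multiplying by the charge and summing over $a\in[K]$ and $\varepsilon\in\cB$ gives
\[
\E[R_N^{(1)}]\lesssim K\sqrt{\tfrac{(S+1)N}{K}}\sum_{\varepsilon\in\cB}\bigl(C_N+\varepsilon C_N^2\bigr)=\sqrt{K(S+1)N}\,\bigl(C_N\,|\cB|+C_N^2\bigr),
\]
where $\sum_{\varepsilon\in\cB}\varepsilon\le1$ and $|\cB|=\lceil\log_2 N\rceil$. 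As $C_N\ge1$, this is $O\!\left(C_N^2\log_2(N)\sqrt{K(S+1)N}\right)$, matching the claimed bound, with the additive $2\sqrt K$ and $2$ accounting for the boundary and $\vee C_N$ corrections.

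I expect the bookkeeping around the exploration obligations to be the main obstacle. The delicate points are: (i) making the charging rigorous when obligations for the same arm are repeatedly re-scheduled, so that active periods are long and contain many $\EXP$ events --- this is precisely where $\ell\le K\sum1/\varepsilon^2$ and $\sqrt{\sum1/\varepsilon^2}\le\sum1/\varepsilon$ are needed to keep the $\sqrt{\ell/K}$ factor distributable across events; (ii) handling the final step of a period (natural exhaustion versus elimination, together with the off-by-one in the $\text{Active}$ index) so that it costs only additive constants; and (iii) the martingale justification that the expected number of $\EXP$ events is controlled by the sum of conditional probabilities even though the episode boundaries and the sizes $|\cN_s|$ are themselves random.
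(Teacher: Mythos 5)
Your proposal is correct and follows essentially the same route as the paper: charge the regret of exploring \BAD arms to the scheduled obligation events $\EXP(a,n,\varepsilon)$, bound the per-obligation regret via $\cE_2\cap\cE_4$ and Lemma~\ref{lem:activeArms}(ii) together with the length bound $\ell\le K/\varepsilon^2$, and take expectations using the conditional scheduling probability $\varepsilon\sqrt{s/(KN)}$ and the $S+1$ episode count. The only (harmless) difference is bookkeeping: you treat a whole maximal active period and distribute $\sqrt{\ell/K}\le\sum 1/\varepsilon$ across its $\EXP$ events, whereas the paper partitions the period into disjoint sub-intervals $[n:\tau_n(\varepsilon,a)]$, one per scheduled obligation, each of length at most $K/\varepsilon^2$; you should still spell out the bad-event term (bounding $\sum_s\sqrt{s}\,|\cN_s|\le N^{3/2}$ pathwise when $\cE_1\cap\cE_2$ fails) as the paper does to get the additive constants.
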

\begin{proof}
Here we bound the regret due to exploring arms in the BAD set in each episode with respect to the arm $a_s^g$. Note that all sampling obligations start and end in the same episode, since all arms are placed in the GOOD set at the beginning of each episode, 

Consider a sampling obligation for arm $a\in \BAD_n$ that is scheduled in time step $n$ in episode $s$ with exploration parameter $\varepsilon$. Let $\tau_n(\varepsilon,a)$ be the time at which this particular sampling obligation expires, that is, either $n'=\tau_n(\varepsilon,a)+1$ is the first time after $n$ when the obligation $\B_{n'}(a)$ is zero (i.e., it is zero after line~\ref{ln:S-update1} of Algorithm~\ref{alg:armswitch}), or $n'=\tau_n(\varepsilon,a)+1$ is the first time step when a new exploration obligation is scheduled for $a$ (by triggering the $\max$ operation in line~\ref{ln:explo2}).
Then, under $\cE_2 \cap \cE_4$, the regret with respect to $a_s^g$ can be bounded as
\begin{align}
\label{eq:expregret1}
\notag
\lefteqn{
\sum_{t\in [n:\tau_n(\varepsilon,a)]} \one\{A_t=a\} (g_t (a_s^g) - g_t(a))
} \\
&\le 
\widetilde\Delta_{n:\tau_n(\varepsilon,a)}(a_s^g,a) + 10C_N\left(\sqrt{\frac{\tau_n(\varepsilon,a)-n+1}{K}} \vee C_N \right) \\
\notag
&\le 34 C_N\left(\sqrt{\frac{\tau_n(\varepsilon,a)-n+1}{K}} \vee C_N \right) + 1 \\
&\le 34C_N \left(\frac{1}{\varepsilon} \vee C_N \right) + 1 \,,
\end{align}
where the first inequality holds by the definition of $\cE_4$ because $a$ is a \BAD arm in $[n:\tau_n(\varepsilon,a)]$ (hence $A_t=a$ is equivalent to $\widetilde A_t=a$) and $C'_{n',n} \le 2 C_N$; the second inequality holds by Lemma~\ref{lem:activeArms}(ii) (since we assumed that $\cE_2$ holds) and the fact that the reward difference between any two arms in one step is at most 1; and the last inequality holds by the fact that the weighted length of the interval cannot be more than $1/\varepsilon^2$ (i.e., $\tau_n(\varepsilon,a)-n+1 \le K/\varepsilon^2$). 

Recall that $\cN_s$ denotes the set of time steps corresponding to episode $s$ and that $\EXP(a,n,\varepsilon)$ denotes the event that an exploration obligation of length $1/\varepsilon^2$ is scheduled for arm $a$ in time step $n$.
Note that the exploration intervals $[n,\tau_n(\varepsilon_n,a)] \subset \cN_s$ for the scheduled explorations of arm $a$ (that is, for which $\EXP(a,n,\varepsilon_n)$ hold) are disjoint for all $n$, and together cover all time steps in episode $s$ where $a$ belongs to the \BAD set and is active.

Using these observations, the expected regret due to exploring BAD arms in episode $s$ with respect to arm $a_s^g$ can be bounded as follows:
\begin{align*}
\E[R_{N,s}^{(1)}] &:= \E\left[\sum_{n\in \cN_s} \one\{A_n\in \BAD_n\} (g_n (a_s^g) - g_n(A_n)) \right]\\
&= \E\left[\sum_{n\in \cN_s} \sum_{a\in \BAD_n} \one\{A_n=a\} (g_n (a_s^g) - g_n(a)) \right]\\
&= \E\left[\sum_{n\in \cN_s} \sum_{a\in \BAD_n} \sum_{\varepsilon\in \cB} \one\{\EXP(a,n,\varepsilon)\} \sum_{t\in [n:\tau_n(\varepsilon,a)]} \one\{A_t=a\} (g_t (a_s^g) - g_t(a)) \right]\\
&\le \E\left[\sum_{n\in \cN_s} \sum_{a\in \BAD_n} \sum_{\varepsilon\in \cB} \one\{\EXP(a,n,\varepsilon)\} \one\{\cE_2 \cap \cE_4\}\sum_{t\in [n:\tau_n(\varepsilon,a)]} \one\{A_t=a\} (g_t (a_s^g) - g_t(a)) \right] +\E\left[\one\{\overline\cE_2 \cup \overline\cE_4\}|\cN_s|\right]\\
& \le \E\left[\sum_{n\in \cN_s} \sum_{a\in \BAD_n} \sum_{\varepsilon\in \cB} \one\{\EXP(a,n,\varepsilon)\}
\left(
34 C_N \left(\frac{1}{\varepsilon} \vee C_N \right) + 1
\right)
\right] 
+\E\left[\one\{\overline\cE_2 \cup \overline\cE_4\}|\cN_s|\right]
\end{align*}
where the first inequality holds trivially when introducing the indicator for $\cE_2 \cap \cE_4$ as the sum is always at most $|\cN_s|$, and the second inequality holds by \eqref{eq:expregret1}.
Taking into account that the conditional probability of $\EXP(a,n,\varepsilon)$ for $n \in \cN_s$, given all the information up to the beginning of time step $n$, which also determines if $n \in \cN_s$, is at most $\varepsilon\sqrt{s/(KN)}$, the first term above can be bounded as
\begin{align*}
\lefteqn{\E\left[\sum_{n\in \cN_s} \sum_{a\in \BAD_n} \sum_{\varepsilon\in \cB} \one\{\EXP(a,n,\varepsilon)\}
\left(
34 C_N \left(\frac{1}{\varepsilon} \vee C_N \right) + 1
\right)
\right] } \\
& \le 
\E\left[\sum_{n\in \cN_s} \sum_{a\in \BAD_n} \sum_{\varepsilon\in \cB} \sqrt{\frac{s}{KN}}
\big(
34C_N \left(1 \vee \varepsilon C_N \right) + \varepsilon
\big)
\right] \\
& \le \E\left[\sum_{n\in \cN_s} \sum_{a\in \BAD_n}
34C_N \sqrt{\frac{s}{KN}} \big(
\log_2(N)+2 +C_N\big)
\right] \\
& \le 34 C_N^2 \log_2(N) \sqrt{\frac{s}{KN}}
\E\left[\sum_{n \in \cN_s} |\BAD_n|\right] \\
&\le 34 C_N^2 \log_2(N) \sqrt{\frac{sK}{N}} \E[|\cN_s|]
\end{align*}
where in the second inequality we used that $|\cB| \le \log_2(N)+1$ and that $\sum_{\varepsilon \in \cB} \varepsilon < 1$, and in the last one that $N \ge 8, K \ge 2$ and that for $\delta=1/N$, $C_N=\sqrt{\log\big(2KN^3(\log(N)+2)\big)}$.

Summing up the above bound we obtain
\begin{align*}
\E[R_N^{(1)}] &= \E\left[\sum_{s} R_{N,s}^{(1)}\right] \\
& \le 34 C_N^2 \log_2(N) \sqrt{\frac{K}{N}} \cdot \E\left[\sum_s \sqrt{s}|\cN_s|\right]
+ \E\left[\one\{\overline\cE_2 \cup \overline\cE_4\}\sum_s |\cN_s|\right] \\
&= 34 C_N^2 \log_2(N) \sqrt{\frac{K}{N}} \left( \underbrace{\E\left[\one\{\cE_1\cap \cE_2\}\sum_s \sqrt{s}|\cN_s|\right]}_A
+ \underbrace{\E\left[\one\{\overline\cE_1 \cup \overline\cE_2\} \sum_s \sqrt{s}|\cN_s|\right]}_B
\right)
+ \underbrace{\E\left[\one\{\overline\cE_2 \cup \overline\cE_4\}\sum_s |\cN_s|\right]}_C 
\end{align*}
Since the number of episodes is at most $S+1$ under $\cE_1 \cap \cE_2$ by Lemma~\ref{lem:episodes}, and since the $\cN_s$ form a partition of $[1,N]$, we have $A \le N \sqrt{S+1}$.
To bound $B$, notice that $B \le M \E\left[\one\{\overline\cE_1 \cup \overline\cE_2\}\right]$
where
\begin{align*}
M &= \max_{k,n_1,\ldots,n_k \ge 1: \sum_{s=1}^k=N} \sum_{s=1}^k \sqrt{s} n_k 
\le \max_{k \in [N]} \sum_{s=1}^{k-1} \sqrt{s} + \sqrt{k}(N-k+1) \\
&\le \max_{k \in [N]} \frac{2}{3} (k-1)^{3/2} + \sqrt{k}(N-k+1) 
\le \max_{k \in [N]} k^{1/2} N \le N^{3/2}~.
\end{align*}
By Lemma~\ref{lemma:hoeff}, $\E\left[\one\{\overline\cE_1 \cup \overline\cE_2\}\right] \le 2/N$, and so $B \le 2 \sqrt{N}$.
Finally, by the same lemma, $\E\left[\one\{\overline\cE_2 \cup \overline\cE_4\}\right] \le 2/N$, and hence $C\le 2$.
Putting everything together, we obtain
\begin{align*}
\E[R_N^{(1)}]  
& \le 34 C_N^2 \log_2(N) \left(\sqrt{K(S+1)N} + 2\sqrt{K})\right)+2,
\end{align*}
as desired.
\end{proof}

\begin{lemma}
\label{lem:R2}
Under the conditions of Theorem~\ref{thm:regret}, we have
\begin{align}
\E\left[R_{N}^{(2)}\right]
\le 44 C_N\sqrt{(C_N^2 +1) K (S+1) N} +3\;. \label{eq:Goodag1}
\end{align}
\end{lemma}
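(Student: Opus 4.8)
The plan is to bound, for each episode $s$ and each arm $a$, the regret accrued while $a$ is played \emph{from the} $\GOOD$ \emph{set}, measured against the surviving arm $a_s^g$, and then sum these contributions across arms and episodes by Cauchy--Schwarz. Throughout I work on $\cE_1\cap\cE_3$; since $R_N^{(2)}\le N$ always, the complementary event contributes at most $N\,\P(\overline{\cE_1}\cup\overline{\cE_3})\le N\cdot(2/N)=2$ in expectation, which accounts for the additive constant. Writing $\sigma_a$ for the last time step of episode $s$ at which $a$ is in $\GOOD$, the indicator $\one\{A_n\in\GOOD_n\}$ restricts the sum to times when the played arm is still $\GOOD$, so $R_N^{(2)}=\sum_s\sum_a\sum_{t\in[t_s:\sigma_a]}\one\{A_t=a\}(g_t(a_s^g)-g_t(a))$.

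The key identity, which is exactly where the fixed common $\GOOD$ sampling probability is used, is that when $a$ and $a_s^g$ are both $\GOOD$ throughout an interval we have $P_t(a)=P_t(a_s^g)$, so the weighted cross term collapses: $G_{t_s:n}(a,a_s^g)-G_{t_s:n}(a,a)=G_{t_s:n}(a_s^g)-G_{t_s:n}(a)=\Delta_{t_s:n}(a_s^g,a)$. First I would invoke $\cE_3$ to pass from the pathwise played regret to this weighted gap, namely $\sum_{t\in[t_s:n]}\one\{A_t=a\}(g_t(a_s^g)-g_t(a))=G'_{t_s:n}(a,a_s^g)-G'_{t_s:n}(a,a)\le \Delta_{t_s:n}(a_s^g,a)+10\,C'_{t_s,n}(\sqrt{P_{t_s:n}(a)}\vee C'_{t_s,n})$. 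Then, taking $n=\sigma_a-1$ so that $a,a_s^g\in\GOOD_{\sigma_a}$ (both still active), the fact that $a$ has not been eliminated means $\better{a_s^g}{a}{n}$ is false, so Lemma~\ref{lem:activeArms}(i) gives $\Delta_{t_s:\sigma_a-1}(a_s^g,a)\le 24\,C_{t_s,\sigma_a-1}(\sqrt{P_{t_s:\sigma_a-1}(a)}\vee C_{t_s,\sigma_a-1})$. Using $C'\le 2C\le 2C_N$ and adding the single omitted step $\sigma_a$ (which contributes at most $1$), this yields a per-arm bound $\mathrm{reg}_{s,a}\le c_0\,C_N(\sqrt{P_{s,a}}\vee C_N)+1$ for an absolute constant $c_0$, where $P_{s,a}:=P_{t_s:\sigma_a-1}(a)$.

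It remains to sum. Within episode $s$ the $\GOOD$ sampling probabilities satisfy $\sum_{a\in\GOOD_t}P_t(a)\le 1$, hence $\sum_a P_{s,a}\le|\cN_s|$; applying Cauchy--Schwarz over the at most $K$ arms and bounding $\sqrt{P}\vee C_N\le\sqrt{P+C_N^2}$ gives a per-episode bound of order $C_N\sqrt{K|\cN_s|}+C_N^2 K$. Summing over the at most $S+1$ episodes (Lemma~\ref{lem:episodes}) and using $\sum_s\sqrt{|\cN_s|}\le\sqrt{(S+1)\sum_s|\cN_s|}=\sqrt{(S+1)N}$ produces $R_N^{(2)}\lesssim C_N\sqrt{K(S+1)N}+C_N^2 K(S+1)$ on $\cE_1\cap\cE_3$. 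Finally I would dispose of the lower-order term $C_N^2 K(S+1)$ by a two-case argument combined with $R_N^{(2)}\le N$: if $K(S+1)\le N$ then $K(S+1)\le\sqrt{K(S+1)N}$ and the term is absorbed into $C_N^2\sqrt{K(S+1)N}$, whereas if $K(S+1)>N$ then $N<\sqrt{K(S+1)N}$ and the trivial bound already suffices; either way $R_N^{(2)}\le \CONST\, C_N\sqrt{(C_N^2+1)K(S+1)N}$ on the good event, and adding back the $O(1)$ failure contribution gives the claimed bound.

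I expect the main obstacle to be the second step: correctly handling the interval endpoint at an elimination (only $[t_s:\sigma_a-1]$ is safe for applying Lemma~\ref{lem:activeArms}(i), since $a$ may leave $\GOOD$ at $\sigma_a$), and legitimately using the cross-term collapse $G_{t_s:n}(a,a_s^g)=G_{t_s:n}(a_s^g)$. This collapse is precisely what converts the $\sqrt{P_{t_s:n}(a)}$ confidence width into a $\sqrt{K|\cN_s|}$ per-episode budget after Cauchy--Schwarz, and is the source of the optimal $\sqrt{K}$ dependence; any attempt to use the uniform-over-active-set distribution here would reintroduce the uncontrolled $\sum_t 1/|\cA_t|$ discussed in the preliminaries.
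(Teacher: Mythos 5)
Your proposal is correct and follows essentially the same route as the paper: the same per-arm decomposition over the interval where $a$ is \GOOD, the same use of $\cE_3$ together with the cross-term collapse $P_t(a)=P_t(a_s^g)$, the same use of the non-firing elimination condition at the penultimate \GOOD step (the paper phrases this via $\widehat\Delta$ and $\cE_1$ directly rather than via Lemma~\ref{lem:activeArms}(i), but these are interchangeable), and the same Cauchy--Schwarz summation with the two-case absorption of the $C_N^2K(S+1)$ term. The only nit is that you must also include $\cE_2$ in the good event, since Lemma~\ref{lem:episodes} needs $\cE_1\cap\cE_2$ to bound the number of episodes by $S+1$; this turns your additive failure contribution of $2$ into the paper's $3$.
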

\begin{proof}
For any $a \in [K]$, let $\cN_s(a)$ denote the set of time steps in episode $s$ on which arm $a$ is \GOOD; note that by definition, $\cN_s(a)$ is an interval of the form $[t_s,\tilde{t}_s(a)]$, where $\tilde{t}_s(a)$ denotes the last time step of episode $s$ in which $a$ is in the \GOOD set. Note that since $a_s^g$ in \GOOD throughout the whole episode, $P_n(a)=P_n(a_s^g)$, and we have
\begin{align*}
\lefteqn{\one\{A_n=a\}(g_n(a_s^g) - g_n(a))} \\
& =  g_n(a_s^g)\big(\one\{A_n=a\} - P_n(a)\big) + 
\big(P_n(a_s^g) g_n(a_s^g) -  \one\{A_n=a_s^g\} r_t \big)  \\
& \quad +
\big(\one\{A_n=a_s^g\} r_t - \one\{A_n=a\} r_t \big)
+ \big(\one\{A_n=a\} r_t - P_n(a) g_n(a)\big) + g_n(a)\big(P_n(a)  -\one\{A_n=a\} )\;.
\end{align*}
Summing up for all $n \in \cN_s(a)$, under $\cE_1 \cap \cE_3$ we get
\begin{align}
\lefteqn{\sum_{n \in \cN_s(a)} \one\{A_n=a\}(g_n(a_s^g) - g_n(a))} \nonumber \\
& \le \left(G'_{t_s:\tilde{t}_s(a)}(a,a_s^g) - G_{t_s:\tilde{t}_s(a)}(a,a_s^g)\right)
+ \left(G_{t_s:\tilde{t}_s(a)}(a_s^g)- \widehat G_{t_s:\tilde{t}_s(a)}(a_s^g)\right) \nonumber\\
& \quad + \widehat \Delta_{t_s:\tilde{t}_s(a)}(a_s^g,a)
+ \left(\widehat G_{t_s:\tilde{t}_s(a)}(a) - 
G_{t_s:\tilde{t}_s(a)}(a)\right)
+ \left(G_{t_s:\tilde{t}_s(a)}(a,a) - G'_{t_s:\tilde{t}_s(a)}(a,a)\right) \nonumber\\
& \le \widehat \Delta_{t_s:\tilde{t}_s(a)}(a_s^g,a)
+ 32 C_N \left(\sqrt{P_{t_s:\tilde{t}_s(a)}(a)} \vee C_N \right)
\label{eq:R2-split}
\end{align}

From here the regret of playing arms in the \GOOD set with respect to $a_s^g$ in episode $s$ can be bounded as follows:
\begin{align*}
\one\{\cE_1 \cap \cE_3\} R_{N,s}^{(2)} &:= \one\{\cE_1 \cap \cE_3\} \sum_{n\in \cN_s} \one\{A_n\in \GOOD_n\}\big(g_n(a_s^g) - g_n(A_n)\big) \\
&= \sum_{a\in [K]} \one\{\cE_1 \cap \cE_3\} \sum_{n\in \cN_s} \one\{A_n=a, a\in \GOOD_n\}\big(g_n(a_s^g) - g_n(a\big) \\
&= \sum_{a\in [K]} \one\{\cE_1 \cap \cE_3\} \sum_{n\in \cN_s(a)} \one\{A_n=a\} \big(g_n(a_s^g) - g_n(a)\big) \\
&\le \one\{\cE_1 \cap \cE_3\} \cdot \sum_{a\in [K]} \left(\widehat \Delta_{t_s:\tilde{t}_s(a)}(a_s^g, a) + 32 C_N \left(\sqrt{P_{t_s:\tilde{t}_s(a)}(a)} \vee C_N \right)\right)\\
&\le \one\{\cE_1 \cap \cE_3\} \cdot \sum_{a\in [K]} \left(44 C_N \left(\sqrt{P_{t_s:\tilde{t}_s(a)}(a)} \vee C_N \right) + 1\right) \\ 
&\le \one\{\cE_1 \cap \cE_3\} \cdot 44 C_N  \sum_{a\in [K]} \left(\sqrt{C_N^2 + P_{t_s:\tilde{t}_s(a)}(a)}  +1\right)  \\
&\le \one\{\cE_1 \cap \cE_3\} \cdot 44 C_N  \sum_{a\in [K]} \left(\sqrt{C_N^2 + P_{t_s:t_{s+1}-1}(a)}  +1\right) \\
&\le \one\{\cE_1 \cap \cE_3\} \cdot 44 C_N\sqrt{C_N^2 K^2 + K   \sum_{a\in [K]} \sum_{t\in \cN_s} P_t(a)} \\
&= \one\{\cE_1 \cap \cE_3\} \cdot 44 C_N\sqrt{C_N^2 K^2 + K |\cN_s|}\;,
\end{align*}
where the first inequality holds by \eqref{eq:R2-split}, the second because the elimination condition does not hold at time step $\tilde{t}_s(a)-1$ and the last term in the summation of $\widehat \Delta_{t_s:\tilde{t}_s(a)}(a_s^g, a)$ is at most 1, and the penultimate step follows from the Cauchy-Schwartz inequality.

Since we have at most $S+1$ episodes under $\cE_1 \cap \cE_2$ (by Lemma~\ref{lem:episodes}) and that $\cE_1 \cap \cE_2 \cap \cE_3$ hold with probability at least $1-3/N$ (by Lemma~\ref{lemma:hoeff} and the choice $\delta=1/N$), we obtain
\begin{align}
 \E\left[R_{N}^{(2)}\right] & \le \E\left[\one\{\cE_1 \cap \cE_2 \cap \cE_3\}
 R_{N}^{(2)}\right] + 3 \nonumber \\
 & \le \E\left[\one\{\cE_1 \cap \cE_2 \cap \cE_3\}
 \sum_s 44 C_N\sqrt{C_N^2 K^2 + K |\cN_s|} \right] + 3 \nonumber \\
& \le 44 C_N\sqrt{C_N^2 K^2 (S+1)^2 + K (S+1) N} +3\;, \label{eq:R2orig_statement}
\end{align} 
where the last step follows again by the Cauchy-Schwartz inequality.

To finish the proof, notice that if $K (S+1) \le N$, \eqref{eq:R2orig_statement} implies 
\begin{align}
\E\left[R_{N}^{(2)}\right] \le 44 C_N\sqrt{(C_N^2+1) K (S+1) N} +3\;.
\label{eq:R2final}
\end{align}
Finally, if $K (S+1) > N$, the right hand side above if lower bounded by $44 C_N^2 N > N$, hence \eqref{eq:R2final} also holds trivially in this case (as the rewards are $[0,1]$-valued, and so $R_N^{(2)} \le N$).
\end{proof}

\subsection{Bounding $R_N^{(3)}$, the regret of playing arms $a_s^g$ with respect to the optimal arms}
\label{sec:R3}

In this section, we bound the regret of playing arms $a_s^g$ with respect to playing the optimal arms. 

\begin{lemma}
\label{lem:R3}
Under the conditions of Theorem~\ref{thm:regret}, if $N\ge 2$, we have
\begin{align*}
\E[R_N^{(3)}] &\le 
C_N(24 + 50 C_N +25 C_N \sqrt{\log_2(N)+1}) \sqrt{K(2S+1)N} + 400 C_N^2\sqrt{K(S+1)N} \\
& \qquad + 800 C_N^2\sqrt{KN} + 25 K (2S+1) C_N^2 + 2 \\
&\le
\CONST\cdot C_N^2\sqrt{K(S+1)N} \log K
\end{align*}
for some appropriate universal constant $\CONST$.
\end{lemma}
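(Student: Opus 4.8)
The plan is to bound $R_N^{(3)}$ by decomposing each episode into \emph{phases}: maximal sub-intervals of an episode on which the identity of the optimal arm is constant. Since there are at most $S+1$ episodes (Lemma~\ref{lem:episodes}, under $\cE_1\cap\cE_2$) and the optimal arm changes at most $S$ times globally, the number of phases is at most $2S+1$. Fix a phase with constant optimal arm $a^*$, lying in episode $s$ with surviving good arm $a_s^g$; its regret is $\sum_{t}(g_t(a^*)-g_t(a_s^g))$, which vanishes if $a^*=a_s^g$. The key structural facts I would use are that within an episode arms move only from \GOOD to \BAD (never back before a restart), and, exactly as in the proof of Lemma~\ref{lem:episodes}, under $\cE_1\cap\cE_2$ an arm cannot be eliminated while it is optimal (all relevant $\Delta$ and $\widetilde\Delta$ against it are nonpositive). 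Consequently, on any single phase $a^*$ is \emph{either} in \GOOD throughout \emph{or} in \BAD throughout, and I would treat the two cases separately.

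For a phase on which $a^*\in\GOOD$ throughout, both $a^*$ and $a_s^g$ are active on the whole phase interval $[\ell:r]$, and since $a_s^g$ is never eliminated by $a^*$, the ``furthermore'' part of Lemma~\ref{lem:activeArms}(ii) gives $\widetilde\Delta_{\ell:r}(a^*,a_s^g)\le 24 C_{\ell,r}\big(\sqrt{(r-\ell+1)/K}\vee C_{\ell,r}\big)$. Since $\widetilde\Delta_{\ell:r}(a^*,a_s^g)=\tfrac1K\sum_{t=\ell}^r(g_t(a^*)-g_t(a_s^g))$, multiplying by $K$ bounds the phase regret by $24 C_N\big(\sqrt{K(r-\ell+1)}\vee K C_N\big)$. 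Summing over the at most $2S+1$ such phases and applying Cauchy--Schwarz against $\sum(r-\ell+1)\le N$ yields a contribution of order $C_N\sqrt{K(2S+1)N}+K(2S+1)C_N^2$, which already accounts for several terms in the statement.

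The main work — and the main obstacle — is the case $a^*\in\BAD$ for the whole phase: here $a^*$ was eliminated earlier in the episode while suboptimal, has only now become optimal, and is therefore active only on the scheduled exploration windows, so the gap can be certified only on those windows. I would run the dichotomy sketched in Section~\ref{sec:regretproof}: to detect a gap accumulated over a length-$L$ interval it suffices to schedule an exploration of $a^*$ of weighted length $1/\varepsilon^2\approx L/K$ (i.e. $\varepsilon\approx\sqrt{K/L}$), covering an actual length-$\approx L$ window on which $a^*$ is active, and the per-step scheduling probability is $\varepsilon\sqrt{s/(KN)}\approx\sqrt{s/(NL)}$. On such a window Lemma~\ref{lem:activeArms}(ii) shows that if the realized gap exceeds $24C_N(\sqrt{L/K}\vee C_N)$ then $\better{a^*}{a_s^g}{n}$ fires; since $a^*$ is optimal, the same window certifies it against \emph{every} surviving \GOOD arm, so \GOOD empties and the episode (hence the phase) restarts, whereas otherwise the window contributes only $\widetilde O(C_N\sqrt{KL})$ regret. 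I would organize this as a peeling argument over dyadic scales $L\in\{1,2,4,\dots\}$, bounding the expected regret accrued before a restart, at each scale, by the product of the worst-case per-scale regret $\widetilde O(C_N\sqrt{KL})$ and the probability that the phase survives that scale without exploration-driven detection; the survival probability decays because over $\approx L$ steps the total scheduling probability is $\approx L\sqrt{s/(NL)}=\sqrt{sL/N}$. Summing the resulting geometric-type series over scales, then over the at most $2S+1$ phases while tracking the episode index $s$ through Cauchy--Schwarz (exactly as the bound $\E[\sum_s\sqrt{s}\,|\cN_s|]\le N\sqrt{S+1}$ was used for $R_N^{(1)}$), produces the claimed $\widetilde O(C_N^2\sqrt{K(S+1)N})$ order.

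The hardest point is making this probabilistic detection rigorous: one must control, simultaneously over all dyadic scales and all phase starting points, the coupling between the random cumulative gap, the random exploration schedule of the \BAD arm $a^*$, and the random active sets, and argue that a large accumulated gap forces a restart with sufficient probability even though $a^*$ is active only on sparse exploration windows. The bookkeeping of the $C_N$ factors, the $\sqrt{L/K}\vee C_N$ versus $\sqrt{KL}$ conversions, the $\sqrt{\log_2 N+1}$ factor from the number of scales, and the lower-order $\widetilde O(C_N^2\sqrt{KN})$ terms from truncating the dyadic sum and from the failure events $\overline{\cE_1}\cup\cdots\cup\overline{\cE_4}$ (each costing $O(1)$ under $\delta=1/N$) would be collected at the end, giving the explicit first line of the lemma. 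The final line then follows by absorbing $C_N^2=\widetilde O(1)$ together with the $\log_2 N$ and $\sqrt{\log_2 N+1}$ factors into the compact bound $\CONST\cdot C_N^2\sqrt{K(S+1)N}\log K$.
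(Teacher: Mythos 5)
Your high-level skeleton matches the paper's: the same partition into at most $2S+1$ maximal sub-intervals on which both the episode and the identity of the optimal arm are constant, the same observation that on each such piece $a^*$ is either \GOOD throughout or \BAD throughout, and the \GOOD case is handled exactly as in the paper (the ``furthermore'' part of Lemma~\ref{lem:activeArms}(ii), multiply by $K$, Cauchy--Schwarz over the pieces). However, for the \BAD case --- which is the heart of the lemma --- you have only described the intended dichotomy and then explicitly deferred ``the hardest point,'' and that deferred point is precisely where the paper's proof does its real work. The difficulty is quantitative: the phase does not split into a single window per dyadic scale but into a data-dependent sequence of \emph{candidate intervals} $J_1,J_2,\ldots$ (minimal windows on which the accumulated gap exceeds the elimination threshold), the regret before a restart is $\sum_k \sqrt{K|J_k|}$ (which can be far larger than $\sqrt{K\sum_k|J_k|}$ when there are many short intervals), and one must show the survival probability decays like $\exp\big(-c\sqrt{s/N}\sum_k\sqrt{|J_k|}\big)$ --- i.e.\ in the \emph{same} quantity --- to make the expectation telescope. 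The paper achieves this via the product bound \eqref{eq:elim-prob}, the elementary inequality of Lemma~\ref{lem:sumsq2} (which is why the doubling intervals $\cJ^d$ must be peeled off separately, contributing the $\sqrt{\log_2(N)+1}$ factor), and a backwards induction over segments resolved by Lemma~\ref{lem:expalpha}. Your dyadic peeling over elapsed length $L$ with ``worst-case per-scale regret $\widetilde O(C_N\sqrt{KL})$'' does not obviously interpolate between the two extremes (one long candidate interval versus many length-$\Theta(KC_N^2)$ ones, where the regret up to elapsed length $L$ is $\Theta(L)$ rather than $\Theta(\sqrt{KL})$), so as stated the per-scale regret bound is wrong and the argument does not close.

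Two further concrete issues. First, $a_s^g$ is determined only at the end of the episode, so you cannot condition on its identity when computing the detection probabilities; the paper avoids this by bounding $R^G(a,\cdot)$ for \emph{every} arm $a$ while it is in the \GOOD set and only afterwards specializing to $a=a_s^g$. Your phrasing (``its regret is $\sum_t(g_t(a^*)-g_t(a_s^g))$'', ``once $a_s^g$ is eliminated the episode restarts'') treats $a_s^g$ as fixed, which is circular inside the probabilistic argument. Second, the claim that a single detection window ``certifies $a^*$ against \emph{every} surviving \GOOD arm, so \GOOD empties'' is not correct: the candidate interval is constructed from the accumulated gap of one particular arm $a$, and other \GOOD arms may have smaller gaps on that window; what the argument actually yields (and all that is needed) is that the specific arm $a$ is eliminated, hence contributes no further regret to $R^G(a,\cdot)$.
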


The rest of the section is devoted to proving this lemma. 
We introduce a partitioning of the time horizon into several intervals, and we bound the regret of $a_s^g$ with respect to $a^*_n$ (where $n \in \cN_s$; recall that $\cN_s$ is the set of time steps in episode $s$).
Throughout we let $c_n(a)=g_n(a_n^*) - g_n(a)$ denote the instantaneous regret of an arm $a \in [K]$ in time step $n$, and $s(n)$ to denote the index of the episode $n$ belongs to (note that $s(n)$ is a random quantity).

We start with analyzing time steps where the optimal arm is in the \GOOD set in Section~\ref{sec:R3-1}, and consider the significantly more complicated case when it belongs to the \BAD set in Section~\ref{sec:R3-2}. Some technical lemmas are presented in Section~\ref{sec:lemmas}.

\paragraph{Partitioning the time horizon.} 
In the analysis we partition the time horizon as follows:
Let $\tau_1 = 1 < \tau_2 < \cdots <\tau_M \le \tau_{M+1}:=N$ denote the time steps when either a new episode starts or the identity of the optimal arm changes. Note that $M$ is random,
and since by Lemma~\ref{lem:episodes} the number of episodes is at most $S+1$ under $\cE_1 \cap \cE_2$, and the identity of the optimal arm changes $S$ times, we have $M \le 2S+1$ when $\cE_1 \cap \cE_2$ holds (i.e., with high probability).
For any $i \in [M]$, let $\tau'_i = \max \{t \in [\tau_i:\tau_{i+1}-1]: a^*_t \in \GOOD_t\}$ if the latter set is non-empty; we let $\tau'_i=\tau_i-1$ if the optimal arm is in the \BAD set during segment $[\tau_i:\tau_{i+1}-1]$. Then (i) the intervals $\{[\tau_i:\tau'_i-1],[\tau_i':\tau_{i+1}-1]: i \in [M]\}$ provide a partition of $[T]$; (ii) each episode $s$ is of the form $[\tau_{i_s}:\tau_{i_s+m_s}-1]$ where $i_s$ is the index marking the beginning of the episode and $m_s$ is the (random) number of segment $[\tau_i:\tau_{i+1}-1]$ in episode $s$; (iii) $[\tau_i:\tau'_i-1]$ and $[\tau_i':\tau_{i+1}-1]$ belong to the same (single) episode for each $i \in [M]$, and $a_{s(n)}^g$ and $a^*_n$ are constant in the time steps $n$ of any of these intervals. Furthermore, letting $\cN^G:=\{n: a^*_n \in \GOOD_n\}$ and $\cN^B=\{n: a^*_n \in \BAD_n\}$ denote the set of time steps when the optimal arm belongs to the \GOOD, respectively \BAD, we have $\cN^G = \bigcup_{i \in [M]}[\tau_i:\tau'_i-1]$ and
$\cN^B = \bigcup_{i \in [M]}[\tau'_i:\tau_{i+1}-1]$.

\subsubsection{The optimal arm belongs to the \GOOD set}
\label{sec:R3-1}

In this section we consider the regret of the arms $a_s^g$ on $\cN^G$.
Consider an interval $I=[\tau_i:\tau'_i-1]$, and let $s=s(\tau_i)$ denote the episode it belongs to. Since both $a_s^g$ and the optimal arm $a^*_I:=a^*_n$ belong to $\GOOD_n$ for all $n \in I$, Lemma~\ref{lem:activeArms}(ii) implies that if $\cE_2$ holds, 
\begin{align*}
\sum_{n \in [\tau_i:\tau'_i-1]} c_n(a_s^g) = K \widetilde \Delta_{\tau_i,\tau'_i-1}(a^*_I,a_s^g) \le 24 C_N \left(\sqrt{K(\tau'_i-\tau_i)} \vee K C_N\right)\;.
\end{align*}
Taking into account that $c_t \le 1 \le K C_N^2$, we have that under $\cE_2$,
\begin{align*}
\sum_{n \in [\tau_i:\tau'_i]} c_n(a_s^g) \le
24 C_N \sqrt{K(\tau'_i-\tau_i)} + 25 K C_N^2\;.
\end{align*}
Since the total length of the intervals is at most $N$ and the number of intervals is at most $2S+1$ under $\cE_1 \cap \cE_2$, the Cauchy-Schwartz inequality implies that the regret of the arms $a_s(n)^g$ under this event for time steps when the optimal arm is in the \GOOD set can be bounded as
\begin{align}
\one\{\cE_1 \cap \cE_2\} \sum_{n=1}^N \one\{a^*_n \in \GOOD_n\} c_n(a_{s(n)}^g)
& \le \one\{\cE_1 \cap \cE_2\}  \sum_{i=1}^m \left(24 C_N \sqrt{K(\tau'_i-\tau_i)} + 25 K C_N^2\right) \nonumber\\
& \le \one\{\cE_1 \cap \cE_2\}  \left(24 C_N \sqrt{m K N} + 25 m K C_N^2 \right) \nonumber\\
& \le 24 C_N \sqrt{K(2S+1)N} + 25 K (2S+1) C_N^2\;.
\label{eq:R3good}
\end{align}
Note that the above bound is of order $C_N^2\sqrt{K(S+1)N}$, as it was discussed in the proof of \eqref{eq:R2final} from \eqref{eq:R2orig_statement}.

\subsubsection{The optimal arms belongs to the \BAD set}
\label{sec:R3-2}

Fix an episode $s$ (recall that this episode is $[\tau_{i_s}:\tau_{i_s+m_s}-1]$). 
In what follows, we bound the regret of arm $a_s^g$ for time steps when the optimal arm belongs to the \BAD set, that is, over all intervals of the form $[\tau'_i,\tau_{i+1}-1]$ for $i\in [i_s:i_s+m_s-1]$.
During the proof we consider the shortest intervals in Lemma~\ref{lem:activeArms}(ii) which ensure that $a_s^g$ is eliminated by the optimal arm (under $\cE_2$). The regret of $a_s^g$ on these intervals is well-controlled, hence their overall contribution to $\E[R^{(3)}]$ is acceptable if there are not too many of them. On the other hand, the more such intervals are in the episode, the larger the probability of detecting the suboptimality of $a_s^g$, hence ending the episode, and limiting $a_s^g$'s contribution to the total regret. Since $a_s^g$ is determined only at the end of segment $s$, we bound the regret of every arm $a \in [K]$. 

Fix an arm $a$. Consider an index $i\in [i_s:i_s+m_s-1]$ and interval $[\tau'_i:\tau_{i+1}-1]$ in episode $s$ (that is, the part of the segment $[\tau_i:\tau_{i+1}-1]$ where the optimal arm is in the \BAD set). For a time step $n \in [\tau'_i+1,\tau_{i+1}-1]$, let $n'\in [n:\tau_{i+1}]$ be the smallest integer such that the following condition is satisfied for some $n \le n'' \le n'$:
\begin{equation}
\label{eq:candidate}
\sum_{t=n''}^{n'-1} c_t(a) > 24 C_N \left(\sqrt{K (n'-n'')} \vee K C_N \right) \;.
\end{equation}
If no such $n'$ exists, let $n'=\tau_{i+1}$.
Let $I_n:=[n:n'-1]$. Given that $c_t\in [0,1]$ and $C_N\ge 1$, the above condition implies $|I_n| = n'-n \ge n'-n'' \ge K$.
Note that if the optimal arm does not change in $I_n$, then the left-hand side of \eqref{eq:candidate} is $K \widetilde \Delta(a_{n'}^*,a)$, and \eqref{eq:candidate} is the elimination condition in Lemma~\ref{lem:activeArms}(ii) (if both $a$ and $a^*_n$ are active in $I_n$.
As such, an interval constructed in this way is called a \emph{candidate} interval for eliminating $a$. 
By definition, \eqref{eq:candidate} does not hold for the interval $[n:n'-2]$, hence, since $c_t\in [0,1]$, $C_N \ge 1$ and $|I_n|\ge K$,
\begin{equation}
\label{eq:regret-asg}
\sum_{t \in I_n} c_t(a) \le 25 C_N \left(\sqrt{K |I_n|} \vee K C_N \right) \le 25 C_N^2 \sqrt{K |I_n|} \,,
\end{equation}
which provides a bound on the regret of $a$ in $I_n$ with respect to the optimal arm (note that \eqref{eq:regret-asg} also applies when \eqref{eq:candidate} does not hold and $n'=\tau_{i+1}$).

Let $E_n$ denote the event that the optimal arm is active in interval $I_n$. By Lemma~\ref{lem:activeArms}(ii), $E_n$ leads to the elimination of $a$ when $\cE_2$ holds, hence we refer to $E_n$ as an \emph{elimination event}. If the optimal arm $a_n^*$ is in $\BAD_n$, it is active in $I_n$ if a new exploration obligation of length at least $|I_n|/K$ is prescribed at the beginning of time step $n$. This happens with probability at least $\frac{1}{2}\sqrt{\frac{s}{N|I_n|}}$  
since a new exploration obligation of length $1/\varepsilon^2$ is prescribed with probability $\varepsilon\sqrt{s/(KN)}$ (see line~\ref{ln:randomchoice} in Algorithm~\ref{alg:armswitch}) for all $\varepsilon \in \cB$.
Therefore, $\P(\overline{E_n}|a_n^* \in \BAD_n) \le 1-\frac{1}{2}\sqrt{\frac{s}{N|I_n|}}$. Define $Q_I := \frac{1}{2}\sqrt{\frac{s}{N|I|}}$ for any interval $I \subset [N]$ and notice that $Q_I \ge Q_{I'}$ if $I\subset I'$. 

Let $\cJ=\{J_1, J_2,\ldots,J_{m}\}$ be a partitioning of $[\tau'_i,\tau_{i+1}-1]$ into candidate intervals ordered by their starting points (if $i<k$ then the starting point of $J_i$ is smaller than that of $J_k$); note that $m$ is a random quantity which depends on $\tau'_i$, the starting point of the segment. 
By the construction of candidate intervals, if an elimination event $E_n$ happens in an interval $J_k$ (i.e., $n \in J_k$), it finishes by the end of $J_{k+1}$, and if $\cE_2$ holds, it means that $a$ is eliminated from the \GOOD set by then (when $a=a_s^g$, this means that the episode also finishes); it is easy to see that this is also true if $I_n$ ends at the end of the episode, that is, in time step $\tau_{i+1}-1$, since in this case either $J_k$ or $J_{k+1}$ ends at the same time.

Let $F_k=\bigcup_{n\in J_k} E_n$ denote the event that an elimination event happens in $J_k$, and $\overline{F}^k:=\bigcap_{j=1}^k \overline{F_j}$ that no elimination event happens before $J_k$. By the argument above, if $\cE_2$ holds, then $\overline{F}^{k-1} \cap F_k$ means that $a$ is eliminated from the \GOOD set by an elimination event happening in $J_k$, hence $a$ can only be in the \GOOD set until the end of $J_{k+1}$.  Therefore, letting 
\[
R^G(a,[\tau'_i:\tau_{i+1}-1]) = \sum_{n=\tau'_i}^{\tau_{i+1}-1}
\one\{a \in \GOOD_n\} c_n(a)
\]
denote the regret of arm $a$ in interval $[\tau'_i+1:\tau_{i+1}-1]$ when it is in the \GOOD set, we have
\begin{align*}
\one\{\cE_1 \cap \cE_2\}\frac{R^G(a,[\tau'_i:\tau_{i+1}-1])}{25 C_N^2} &\le \one\{\cE_1 \cap \cE_2 \cap F_1\} (\sqrt{K |J_1|}+\sqrt{K |J_2|})\nonumber \\ 
&\qquad+ \one\{\cE_1 \cap \cE_2 \cap \overline{F_1}\cap F_2\} (\sqrt{K |J_1|}+\sqrt{K |J_2|}+\sqrt{K |J_3|})\nonumber \\ 
&\qquad+ \ldots \nonumber \\
&\qquad+ \one\{\cE_1 \cap \cE_2 \cap \overline{F_1}\cap \ldots \cap \overline{F_{m-2}} \cap F_{m-1}\} (\sqrt{K |J_1|}+\ldots +\sqrt{K |J_m|})\nonumber \\
&\qquad+ \one\{\cE_1 \cap \cE_2 \cap \overline{F_1}\cap \ldots \cap \overline{F_{m-2}} \cap \overline{F_{m-1}}\} (\sqrt{K |J_1|}+\ldots +\sqrt{K |J_m|}) \nonumber \\
&\le \one\{\cE_1 \cap \cE_2 \cap \overline{F_1}\cap F_2\} \sqrt{K |J_1|}\nonumber \\ 
&\qquad+ \ldots \nonumber \\
&\qquad+ \one\{\cE_1 \cap \cE_2 \cap \overline{F_1}\cap \ldots \cap \overline{F_{m-2}} \cap F_{m-1}\} (\sqrt{K |J_1|}+\ldots +\sqrt{K |J_{m-2}|})\nonumber \\
&\qquad+ \one\{\cE_1 \cap \cE_2 \cap \overline{F_1}\cap \ldots \cap \overline{F_{m-2}} \cap \overline{F_{m-1}}\} (\sqrt{K |J_1|}+\ldots +\sqrt{K |J_{m-2}|}) \nonumber \\
&\qquad+2\one\{\cE_1 \cap \cE_2\}\sqrt{K(\tau_{i+1}-\tau_i)} \nonumber \\
&= \one\{\cE_1 \cap \cE_2 \cap \overline{F}^1\} \sqrt{K |J_1|}
+ \one\{\cE_1 \cap \cE_2 \cap \overline{F}^2\} \sqrt{K |J_2|} \nonumber \\
& \qquad + \ldots
+ \one\{\cE_1 \cap \cE_2 \cap \overline{F}^{m-2}\} \sqrt{K |J_{m-2}|}
+2\one\{\cE_1 \cap \cE_2\}\sqrt{K(\tau_{i+1}-\tau_i)}\;,
\end{align*} 
where (i) the first inequality holds by the above argument about the elimination of $a$ form the \GOOD set and by \eqref{eq:regret-asg} bounding its regret in intervals $J_1,J_2,\ldots$;
(ii) the second inequality holds by bounding the regret of the last two intervals in every row by $\sqrt{K(\tau_{i+1}-\tau_i)}$, as obviously $|J_k| \le \tau_{i+1}-\tau_i$ for all $k$; (iii) the last equality follows by collecting like terms. Therefore,
\begin{align}
\label{eq:Rtau'-tau}
\one\{\cE_1 \cap \cE_2\} R^G(a,[\tau'_i:\tau_{i+1}\!-\!1]) &\le
25 C_N^2\left( 2\one\{\cE_1 \cap \cE_2\}\sqrt{K(\tau_{i+1}-\tau_i)} + \sum_{k=1}^{m-2} \one\big\{\cE_1 \cap \cE_2 \cap \overline{F}^k\big\} \sqrt{K |J_k|}\right).
\end{align}

To proceed from here, we need to bound the probability of events $\overline{F}^k$ for $k \in [m-2]$.
Let $\{J_1^k, J_2^k,\ldots, J_{|J_k|}^k\}$ be the set of all candidate intervals starting in $J_k$. Note that by definition, all intervals $J_i^k$ end no later than the last time step of $J_{k+1}$, hence $J_i^k \subset J_k \cup J_{k+1}$.
Therefore, conditioned on $\tau'_i$, the probability that no elimination event happens before the end of $J_k$ can be upper bounded as\footnote{Note that history $\cH'_n$ is defined in line~\ref{ln:new_step} of Algorithm~\ref{alg:armswitch}.}
\begin{align}
\notag
\P\left(\overline{F}^k\middle| \cH'_{\tau'_i} \right) & = \P\Bigg(\bigcap_{n \in \cup_{j=1}^k J_j} \overline{E_n} 
\Big| \cH'_{\tau'_i} \Bigg) \\
&\le \prod_{n \in \cup_{j=1}^k J_j} \P(\overline{E_n}) \notag \\
&\le (1-Q_{J_1^1})\cdot (1-Q_{J_2^1}) \ldots (1-Q_{J_{|J_1|}^1}) \cdot (1-Q_{J_1^2})\cdot (1-Q_{J_2^2})\ldots (1-Q_{J_{|J_2|}^2}) \notag \\ 
\notag
&\qquad\qquad\cdot (1-Q_{J_1^k})\cdot (1-Q_{J_2^k}) \ldots (1-Q_{J_{|J_k|}^k}) \\ 
\notag
&\le (1-Q_{J_1\cup J_2})\cdot (1-Q_{J_1\cup J_2}) \ldots (1-Q_{J_1\cup J_2}) \cdot (1-Q_{J_2\cup J_3})\cdot (1-Q_{J_2\cup J_3})\ldots (1-Q_{J_2\cup J_3}) \\ 
\notag
&\qquad\qquad\cdot (1-Q_{J_k\cup J_{k+1}})\cdot (1-Q_{J_k\cup J_{k+1}}) \ldots (1-Q_{J_k\cup J_{k+1}}) \\
\notag
&= \left(1-Q_{J_1\cup J_2}\right)^{|J_1|} \cdot \left(1 -  Q_{J_2\cup J_3}\right)^{|J_2|} \ldots (1-Q_{J_k\cup J_{k+1}})^{|J_k|}\\
&\le \exp\left(- |J_1| Q_{J_1\cup J_2} \right) \cdot \exp\left(- |J_2| Q_{J_2\cup J_3} \right)\ldots \exp\left(- |J_k| Q_{J_k\cup J_{k+1}} \right) \notag \\
& =
\exp\left(-\frac{1}{2}\sqrt{\frac{s}{N}} \sum_{j=1}^k\frac{|J_j|}{\sqrt{|J_j|+|J_{j+1}|}}\right),
\label{eq:elim-prob}
\end{align}
where in the third inequality, we used the fact that $Q_I \ge Q_{I\cup I'}$ for any intervals $I,I'$, and in the last inequality we used that $1-x \le e^{-x}$ for any real number $x$. 

We further bound \eqref{eq:elim-prob} by Lemma~\ref{lem:sumsq2}. To do so, let $\cJ^d=\{J_k \subset \cJ: |J_{k+1}| > \sum_{j=1}^k |J_j|\}$ denote the set of intervals where adding $J_{k+1}$ at least doubles the length of the interval $\bigcup_{j=1}^k J_k$ covered so far (these are the intervals where the second part of the lemma does not apply). 
We bound the regret of $a$ on intervals in $\cJ^d$ separately. Because of the aforementioned doubling of length, and since $\left|\bigcup_{J \in \cJ} J\right| \le \tau_{i+1}-\tau'_i$, $|\cJ^d| \le \log_2(N)+1$. Therefore, by the Cauchy-Schwartz inequality,
\begin{align}
\label{eq:regret-Jd}
\sum_{J \in \cJ^d} 25 C_N^2\sqrt{K|J|}
\le 25 C_N^2\sqrt{K(\tau_{i+1}-\tau'_{i})(\log_2(N)+1)} \;.
\end{align}

To consider the remaining intervals, let $\cJ'=\cJ \setminus \cJ^d$, $m'=|\cJ'|$, and let $v:[m] \to [m'] \cup \{\star\}$ be an index mapping such that $v(k)$ is the index of $J_k$ in $\cJ'$ according to ordering by the starting point if $J_k \in \cJ'$ and $\star$ otherwise, and let $v^{-1}$ define its inverse mapping restricted to $[m']$. Furthermore, let $x_k:=|J_{v^{-1}(k)}|$. Then, by \eqref{eq:elim-prob} and Lemma~\ref{lem:sumsq2}, for any $k$ such that $J_k \in \cJ'$,
\begin{align}
\P\left(\overline{F}^k\middle| \cH'_{\tau'_i} \right)    
& \le \exp\left(-\frac{1}{2}\sqrt{\frac{s}{N}} \sum_{j=1}^k\frac{|J_j|}{\sqrt{|J_j|+|J_{j+1}|}}\right) \nonumber \\
&\le \exp\left(-\frac{1}{8}\sqrt{\frac{s}{N}} \sum_{j=1}^k\sqrt{|J_j|}\right)\nonumber \\
& = \exp\left(-\frac{1}{8}\sqrt{\frac{s}{N}} \sum_{j \in [1:k]: J_j \in \cJ'}\sqrt{|J_j|}\right) \nonumber \\
& \le \exp\left(-\frac{1}{8}\sqrt{\frac{s}{N}} \sum_{j=1}^{m'} \sqrt{x_j}\right)~.
\label{eq:PFkbound}
\end{align}

Now let $R^G(a,\tau_i):=\sum_{n \in [\tau_i:\tau_{i_s+m_s}-1]} \one\{a \in \GOOD_n, a_n^*\in\BAD_n\} c_n(a)$ denote the regret of arm $a$ when the optimal arm is in \BAD and $a$ is \GOOD in episode $s$ starting from $\tau_i$. Let $D_N=25 C_N^2\sqrt{K}$ and $D'_N=25 C_N^2\sqrt{K}(2+\sqrt{\log_2(N)+1})$, and we also use the notation $\overline{F}^{J_k}:=\overline{F}^k$ and $\cJ''=\cJ'\setminus \{J_{m-1},J_m\}$. We bound $\E\left[\one\{\cE_1 \cap \cE_2\}R^G(a,\tau_i)|\cH'_{\tau_i}\right]$ recursively as follows:
\begin{align}
\lefteqn{\E\left[\one\{\cE_1 \cap \cE_2\}R^G(a,\tau_i)|\cH'_{\tau_i}\right]} \nonumber \\ 
&= \E\Bigg[\E\Big[\one\{\cE_1 \cap \cE_2\} R^G(a,\tau_i)\,\Big|\,\cH'_{\tau'_i}\Big]\,\Bigg|\,\cH'_{\tau_i}\Bigg] \nonumber \\
&= \E\Bigg[\E\bigg[\one\{\cE_1 \cap \cE_2\} R^G(a,[\tau'_i:\tau_{i+1}-1]) + \one\big\{\cE_1 \cap \cE_2 \cap\overline{F}^{m-2}\big\} R^G(a,\tau_{i+1}) \,\bigg|\,\cH'_{\tau'_i}\bigg]\,\Bigg|\,\cH'_{\tau_i}\Bigg] \nonumber \\
&\le \E\Bigg[\E\bigg[D'_N\one\{\cE_1 \cap \cE_2\}\sqrt{\tau_{i+1}-\tau_i} + D_N \sum_{J \in \cJ''}  \one\big\{\cE_1 \cap \cE_2 \cap\overline{F}^{J}\big\} \sqrt{|J|} + \one\big\{\cE_1 \cap \cE_2 \cap\overline{F}^{m-2}\big\} R^G(a,\tau_{i+1})\,\bigg|\,\cH'_{\tau'_i} \bigg]\,\Bigg|\,\cH'_{\tau_i} \Bigg] \nonumber \\
&= D'_N\E\left[\one\{\cE_1 \cap \cE_2\}\sqrt{\tau_{i+1}-\tau_i}\, \middle|\cH'_{\tau_i}\right] \nonumber \\
& \qquad + \E\Bigg[\E\bigg[\E\Big[ D_N \sum_{J \in \cJ''}  \one\big\{\cE_1 \cap \cE_2 \cap\overline{F}^{J}\big\} \sqrt{|J|} + \one\big\{\cE_1 \cap \cE_2 \cap\overline{F}^{m-2}\big\} R^G(a,\tau_{i+1})\,\Big| \cH'_{\tau_{i+1}}\Big]\,\bigg|\,\cH'_{\tau'_i} \bigg]\,\Bigg|\,\cH'_{\tau_i} \Bigg] \nonumber \\
&\le D'_N\E\left[\one\{\cE_1 \cap \cE_2\}\sqrt{\tau_{i+1}-\tau_i}\, \middle|\cH'_{\tau_i}\right]\nonumber \\
& \qquad
+ \E\Bigg[\E\bigg[D_N \sum_{J \in \cJ''}  \one\big\{\overline{F}^{J}\big\} \sqrt{|J|} + \one\big\{\overline{F}^{m-2}\big\} \E\Big[\one\big\{\cE_1 \cap \cE_2\big\}R^G(a,\tau_{i+1})\,\Big| \cH'_{\tau_{i+1}}\Big]\,\bigg|\,\cH'_{\tau'_i} \bigg]\,\Bigg|\,\cH'_{\tau_i} \Bigg]\;, 
\label{eq:RGbound}
\end{align}
where (i) the second equality follows since the $a\in \GOOD_{\tau_{i+1}}$ is only possible (under $\cE_2$) if $\cE_2 \cap \overline{F}^{m-2}$ holds; (ii) the inequality holds by \eqref{eq:Rtau'-tau} and \eqref{eq:regret-Jd}; and (iii) the last inequality holds by dropping some of the indicators $\one\{\cE_1 \cap \cE_2\}$.

Let $\cS_i$ denote the collection of segments in episode $s$ after $\tau_i$, that is, $\cS_i=\{[\tau_j:\tau_{j+1}-1]:j\in [i:i_s+m_s-1]\}$. We prove by induction that
\begin{align}
\label{eq:R3-2-episode}
\E\Big[\one\{\cE_1 \cap \cE_2\}R^G(a,\tau_{i}) \,\Big|\,\cH'_{\tau_{i}}\Big] \le D'_N\E\left[\one\{\cE_1 \cap \cE_2\}\sum_{I \in \cS_i} \sqrt{|I|} \middle| \cH'_{\tau_i}\right] + 8 D_N \sqrt{\frac{ N}{s}}\;,
\end{align}
from which our desired regret bound follows easily.
Indeed, since $a_s^g$ is in the \GOOD set in the entire segment~$s$, 
\[
R^G(a_s^g,\tau_{i_s}) = \sum_{i=i_s}^{i_s+m_s-1}
\sum_{n \in [\tau'_i:\tau_{i+1}-1]} c_n(a_s^g)
= \sum_{n \in [\tau_{i_s}:\tau_{i_{s+1}-1}]}
\one\{a^*_n \in \BAD_n\} c_n(a_{s(n)}^g)\;,
\]
and so
\begin{align*}
\lefteqn{\E\left[\one\{\cE_1 \cap \cE_2\} \sum_{n=1}^N \one\{a^*_n \in \BAD_n\} c_n(a_{s(n)}^g)\right]} \\
& \le
D'_N \E\left[\one\{\cE_1 \cap \cE_2\} \sum_s \sum_{I \in \cS_{i_s}}\sqrt{|I|}\right]
+ 8 D_N \E\left[\sum_s \sqrt{\frac{ N}{s}}\right] \\
& = D'_N\E\left[\one\{\cE_1 \cap \cE_2\} \sum_i \sqrt{\tau_{i+1}-\tau_i}\right]
+ 8 D_N \sqrt{N} \E\left[\one\{\cE_1 \cap \cE_2\}\sum_s \frac{1}{\sqrt{s}}\right]
+ 8 D_N \sqrt{N} \E\left[\one\{\overline{\cE_1} \cup \overline{\cE_2}\}\sum_s \frac{1}{\sqrt{s}}\right] \\
& \le
D'_N \sqrt{(2S+1)N} + 8 D_N \sqrt{N} \sum_{s=1}^{S+1} \frac{1}{\sqrt{s}} + 8 D_N \sqrt{N}\sum_{s=1}^{N} \frac{1}{\sqrt{s}} \P(\overline{\cE_1} \cup \overline{\cE_2}) \\
& \le D'_N \sqrt{(2S+1)N} + 16 D_N \sqrt{(S+1)N}
+ 32 D_N \sqrt{N} \\
& = 25 C_N^2(2+\sqrt{\log_2(N)+1})\sqrt{K(2S+1)N}
+ 400 C_N^2\sqrt{K(S+1)N} + 800 C_N^2\sqrt{KN}
\;,
\end{align*}
where (i) the second inequality holds by the Cauchy-Schwartz inequality because $\sum_i (\tau_{i+1}-\tau_i)=N$ and under $\cE_1 \cap \cE_2$ there are at most $2S+1$ segments (as discussed at the beginning of Section~\ref{sec:R3}) and $S+1$ episodes, and without any condition there are at most $N$ episodes; (ii) the last inequality holds because
$\sum_{s=1}^n 1/\sqrt{s} \le \int_0^n 1/\sqrt{s} ds = 2\sqrt{s}$ and $\P(\overline{\cE_1} \cup \overline{\cE_2}) \le 2/N$ by Lemma~\ref{lemma:hoeff} and the choice $\delta=1/N$.
Combining the above inequality with \eqref{eq:R3good} and 
Lemma~\ref{lemma:hoeff} to show that when $\cE_1 \cap \cE_2$ does not hold, the expected regret of the algorithm is at most 2, proves Lemma~\ref{lem:R3}.

It remains to prove \eqref{eq:R3-2-episode}. By definition, $R^G(a,\tau_k)=0$ when $a$ is eliminated from the \GOOD set. This happens at latest at the end of the episode, so $R^G(a,\tau_{i_{s+1}})=0$, satisfying \eqref{eq:R3-2-episode} for $i=i_{s+1}$ (note that we can define, without loss of generality $R^G(a,N+1)=0$), so the starting assumption holds for backwards induction.
Assume now that
\[
\E\Big[\one\{\cE_1 \cap \cE_2\}R^G(a,\tau_{i+1}) \,\Big|\,\cH'_{\tau_{i+1}}\Big] \le D'_N\E\left[\one\{\cE_1 \cap \cE_2\}\sum_{I \in \cS_{i+1}} \sqrt{|I|} \middle| \cH'_{\tau_{i+1}}\right] + 8 D_N \sqrt{\frac{ N}{s}}
\]
holds. Combining this assumption with \eqref{eq:RGbound}, it remains to prove that
\begin{align*}
\E\Bigg[\E\bigg[D_N \sum_{J \in \cJ''}  \one\big\{\overline{F}^{J}\big\} \sqrt{|J|} + 8 D_N \one\big\{\overline{F}^{m-2}\big\} \sqrt{\frac{N}{s}} \,\bigg|\,\cH'_{\tau'_i} \bigg]\,\Bigg|\,\cH'_{\tau_i} \Bigg]
\le 8 D_N \sqrt{\frac{N}{s}}\;. 
\end{align*}
Let $m''=|\cJ''|$. Using the introduced $x_k$ notation for the length of the intervals as well as the upper bound \eqref{eq:PFkbound} on $\P(\overline{F}^K|\cH'_{\tau'_i})$, we have
\begin{align*}
\lefteqn{\E\Bigg[D_N \sum_{J \in \cJ''}  \one\big\{\overline{F}^{J}\big\} \sqrt{|J|} + 8 D_N \one\big\{\overline{F}^{m-2}\big\} \sqrt{\frac{N}{s}} \,\Bigg|\,\cH'_{\tau'_i} \Bigg]} \\
& \le
D_N \sum_{k=1}^{m''} \exp\left(-\frac{1}{8}\sqrt{\frac{s}{N}}\,\sum_{j=1}^{k} \sqrt{x_j} \right) \sqrt{x_k} + D_N \exp\left(-\frac{1}{8}\sqrt{\frac{s}{N}}\,\sum_{j=1}^{m''} \sqrt{x_j} \right)8 \sqrt{\frac{N}{s}} \\
& \le 8 D_N \sqrt{\frac{N}{s}}
\end{align*}
by Lemma~\ref{lem:expalpha} with $\alpha=\sqrt{s/N}/8$ and $y_j=\sqrt{x_j}$.
This finishes the proof of Lemma~\ref{lem:R3}.

\subsection{Technical lemmas}
\label{sec:lemmas}

\begin{lemma}
\label{lem:sumsq2}
Let $x_1,\ldots, x_{n+1}$ be a sequence of positive reals. Then
\[
\frac{1}{2} \sum_{j=1}^n \sqrt{x_j} - \frac{\sqrt{x_{n+1}}}{4} \le \sum_{j=1}^n \frac{x_j}{\sqrt{x_j+x_{j+1}}} \;.
\]
In particular,
if $x_{n+1} \le \sum_{j=1}^n x_j$, then
\[
\frac{1}{4} \sum_{j=1}^n \sqrt{x_j} \le \sum_{j=1}^n \frac{x_j}{\sqrt{x_j+x_{j+1}}} \;.
\]
\end{lemma}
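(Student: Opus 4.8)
The plan is to prove the first (stronger) inequality directly and then obtain the ``in particular'' statement as an immediate corollary. The first move is to pass from the Euclidean denominator to an $\ell_1$-type denominator that telescopes: since $x_j + x_{j+1} \le (\sqrt{x_j}+\sqrt{x_{j+1}})^2$, I would lower bound each summand by
\[
\frac{x_j}{\sqrt{x_j+x_{j+1}}} \ge \frac{x_j}{\sqrt{x_j}+\sqrt{x_{j+1}}}\,.
\]
This is the only place the structure of the denominator is used, and it is the step whose direction must be checked carefully.

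The second step is the algebraic identity obtained from $x_j = \sqrt{x_j}\big(\sqrt{x_j}+\sqrt{x_{j+1}}\big) - \sqrt{x_j}\sqrt{x_{j+1}}$, namely
\[
\frac{x_j}{\sqrt{x_j}+\sqrt{x_{j+1}}} = \sqrt{x_j} - \frac{\sqrt{x_j}\,\sqrt{x_{j+1}}}{\sqrt{x_j}+\sqrt{x_{j+1}}}\,.
\]
I would then control the subtracted (half-harmonic-mean) term from above by the elementary inequality $\frac{ab}{a+b}\le\frac{a+b}{4}$, which is just $(a-b)^2\ge 0$, applied with $a=\sqrt{x_j}$ and $b=\sqrt{x_{j+1}}$, giving
\[
\frac{\sqrt{x_j}\,\sqrt{x_{j+1}}}{\sqrt{x_j}+\sqrt{x_{j+1}}} \le \frac{\sqrt{x_j}+\sqrt{x_{j+1}}}{4}\,.
\]

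Summing over $j\in[n]$ and telescoping the right-hand side, one has $\sum_{j=1}^n(\sqrt{x_j}+\sqrt{x_{j+1}}) = 2\sum_{j=1}^n\sqrt{x_j} - \sqrt{x_1} + \sqrt{x_{n+1}} \le 2\sum_{j=1}^n\sqrt{x_j} + \sqrt{x_{n+1}}$, so the total subtracted mass is at most $\tfrac12\sum_{j=1}^n\sqrt{x_j}+\tfrac14\sqrt{x_{n+1}}$. Combining the three steps yields exactly $\sum_{j=1}^n \frac{x_j}{\sqrt{x_j+x_{j+1}}} \ge \tfrac12\sum_{j=1}^n\sqrt{x_j} - \tfrac14\sqrt{x_{n+1}}$, which is the first claim.

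Finally, for the ``in particular'' part it suffices to show $\sqrt{x_{n+1}}\le\sum_{j=1}^n\sqrt{x_j}$, since then $\tfrac12\sum_{j=1}^n\sqrt{x_j} - \tfrac14\sqrt{x_{n+1}} \ge \tfrac14\sum_{j=1}^n\sqrt{x_j}$. Under the hypothesis this follows from subadditivity of the square root, $\sqrt{x_{n+1}}\le\sqrt{\sum_{j=1}^n x_j}\le\sum_{j=1}^n\sqrt{x_j}$. The lemma is elementary, so there is no genuine obstacle; the only points requiring care are the direction of the first-step bound and the boundary bookkeeping (the $\sqrt{x_1}$ and $\sqrt{x_{n+1}}$ terms) in the telescoping step.
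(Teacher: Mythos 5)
Your proof is correct and is essentially the same as the paper's: both begin by replacing $\sqrt{x_j+x_{j+1}}$ with $\sqrt{x_j}+\sqrt{x_{j+1}}$ in the denominator and both ultimately reduce to $(\sqrt{x_j}-\sqrt{x_{j+1}})^2\ge 0$, yielding the identical per-term bound $\frac{x_j}{\sqrt{x_j}+\sqrt{x_{j+1}}}\ge \frac{3\sqrt{x_j}-\sqrt{x_{j+1}}}{4}$. The only (cosmetic) difference is that you obtain this bound directly via the harmonic-mean estimate $\frac{ab}{a+b}\le\frac{a+b}{4}$, whereas the paper averages a telescoped representation with the original sum and applies $\frac{a^2+b^2}{a+b}\ge\frac{a+b}{2}$; the boundary bookkeeping and the deduction of the second statement coincide.
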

\begin{proof}
We have
\begin{align}
\sum_{j=1}^n \frac{x_j}{\sqrt{x_j+x_{j+1}}}
& \ge
\sum_{j=1}^n \frac{x_j}{\sqrt{x_j}+\sqrt{x_{j+1}}}
\label{eq:lll1} \\
& = \sum_{j=1}^n \left(\frac{x_j-x_{j+1}}{\sqrt{x_j}+\sqrt{x_{j+1}}} +
\frac{x_{j+1}}{\sqrt{x_j}+\sqrt{x_{j+1}}} \right) \nonumber\\
& = \sum_{j=1}^n \left( \sqrt{x_j} - \sqrt{x_{j+1}}
+ \frac{x_{j+1}}{\sqrt{x_j}+\sqrt{x_{j+1}}} \right) \nonumber\\
&= \sqrt{x_1} - \sqrt{x_{n+1}} +
\sum_{j=1}^n \frac{x_{j+1}}{\sqrt{x_j}+\sqrt{x_{j+1}}}~.
\label{eq:lll2}
\end{align}
Combining \eqref{eq:lll1} and \eqref{eq:lll2}, we obtain
\begin{align*}
 \sum_{j=1}^n \frac{x_j}{\sqrt{x_j+x_{j+1}}}
& \ge \frac{\sqrt{x_1}-\sqrt{x_{n+1}}}{2}
+ \frac{1}{2} \sum_{j=1}^n \frac{x_j + x_{j+1}}{\sqrt{x_j}+\sqrt{x_{j+1}}} \\
& \ge \frac{\sqrt{x_1}-\sqrt{x_{n+1}}}{2}
+ \frac{1}{4} \sum_{j=1}^n \left(\sqrt{x_j} + \sqrt{x_{j+1}}\right) \\
& \ge \frac{1}{2} \sum_{j=1}^n \sqrt{x_j} - \frac{\sqrt{x_{n+1}}}{4}\;,
\end{align*}
finishing the proof of the first statement.
The second statement 
holds because $x_{n+1} \le \sum_{j=1}^n x_j$ implies
$\sqrt{x_{n+1}} \le \sqrt{\sum_{j=1}^n x_j} \le \sum_{j=1}^n \sqrt{x_j}$.
\end{proof}

\begin{lemma}
\label{lem:expalpha}
Let $\alpha,y_1,\ldots,y_m$ be positive real numbers.
Then
\begin{equation}
\label{eq:recursion}
\sum_{k\in [m]} \exp\left(-\alpha\sum_{j=1}^{k} y_j \right) y_k + \exp\left(-\alpha\sum_{j=1}^{m} y_j \right)\frac{1}{\alpha} \le \frac{1}{\alpha} \;.
\end{equation}
\end{lemma}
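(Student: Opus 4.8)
The plan is to introduce the partial sums $S_k := \sum_{j=1}^k y_j$ (with $S_0 := 0$) and to read the left-hand side of \eqref{eq:recursion} as a monotone, telescoping quantity in $m$. Since the label of \eqref{eq:recursion} already signals a recursive structure, I would prove the bound by induction on $m$, setting $T_m := \sum_{k=1}^m \exp(-\alpha S_k)\, y_k + \tfrac{1}{\alpha}\exp(-\alpha S_m)$ and showing $T_m \le \tfrac{1}{\alpha}$.

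For the base case $m=0$ the sum is empty and $T_0 = \tfrac{1}{\alpha}\exp(0) = \tfrac{1}{\alpha}$, so equality holds. The inductive step reduces to proving that $T_m$ is nonincreasing, i.e.\ $T_m \le T_{m-1}$, from which $T_m \le T_0 = \tfrac{1}{\alpha}$ follows immediately. Forming the difference and using $S_m = S_{m-1} + y_m$, the inequality $T_m \le T_{m-1}$ becomes $\exp(-\alpha S_m)\bigl(y_m + \tfrac{1}{\alpha}\bigr) \le \tfrac{1}{\alpha}\exp(-\alpha S_{m-1})$. Dividing through by the positive factor $\exp(-\alpha S_{m-1})$ makes the dependence on the history $S_{m-1}$ cancel, leaving a single-variable inequality in $y_m$ alone: writing $t := \alpha y_m > 0$, it is exactly $\exp(-t)(1+t) \le 1$.

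The only real content is this elementary inequality, which I would dispatch by rewriting it as $1 + t \le \exp(t)$, the standard convexity (first-order Taylor) bound valid for all real $t$. I do not anticipate a genuine obstacle; the mild subtlety is organizing the algebra so that the $S_{m-1}$ factor cleanly cancels, isolating an inequality that no longer references the partial sums. As an alternative non-inductive route, one may bound each summand by an integral: since $x \mapsto \exp(-\alpha x)$ is decreasing, $\exp(-\alpha S_k)\, y_k \le \int_{S_{k-1}}^{S_k} \exp(-\alpha x)\,dx$, and summing telescopes to $\int_0^{S_m}\exp(-\alpha x)\,dx = \tfrac{1}{\alpha}\bigl(1 - \exp(-\alpha S_m)\bigr)$; adding the remaining term $\tfrac{1}{\alpha}\exp(-\alpha S_m)$ yields exactly $\tfrac{1}{\alpha}$. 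Either route completes the proof.
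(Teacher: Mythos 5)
Your primary argument is correct and is essentially the paper's own proof: the same induction (peeling off the last term, equivalently showing $T_m\le T_{m-1}$), reducing to the single-variable inequality $e^{-t}(1+t)\le 1$, which the paper verifies by a derivative computation and you verify via $1+t\le e^t$ --- an immaterial difference. Your alternative integral-comparison route ($e^{-\alpha S_k}y_k\le\int_{S_{k-1}}^{S_k}e^{-\alpha x}\,dx$, telescoping to exactly $\tfrac{1}{\alpha}$) is also valid and arguably cleaner, but the induction you lead with matches the paper.
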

\begin{proof}
First, we show that $e^{-\alpha y} y + e^{-\alpha y}\frac{1}{\alpha} \le \frac{1}{\alpha}$. Let $f(y)=e^{-\alpha y} y + e^{-\alpha y}\frac{1}{\alpha}$, and observe that 
\[
f'(y) = e^{-\alpha y} -\alpha e^{-\alpha y} y - e^{-\alpha y} = -\alpha e^{-\alpha y} y \le 0 \;.
\]
Therefore $f$ is maximized at $y=0$, and so $f(y) \le f(0) = \frac{1}{\alpha}$. 
We finish the proof by induction. Assume \eqref{eq:recursion} holds for $m-1$ (in place of $m$). Then
\begin{align*}
\lefteqn{\sum_{k\in [m]} \exp\left(-\alpha\sum_{j=1}^{k} y_j \right) y_k + \exp\left(-\alpha\sum_{j=1}^{m} y_j \right)\frac{1}{\alpha} \le \frac{1}{\alpha} } \\
& =
 \sum_{k\in [m-1]} \exp\left(-\alpha\sum_{j=1}^{k} y_j \right) y_k +
\exp\left(-\alpha\sum_{j=1}^{m-1} y_j \right) \left(e^{-\alpha y_m}y_m + e^{-\alpha y_m}\frac{1}{\alpha}\right)\\
&\le \sum_{k\in [m-1]} \exp\left(-\alpha\sum_{j=1}^{k} y_j \right) y_k + \exp\left(-\alpha\sum_{j=1}^{m-1} y_j \right) \frac{1}{\alpha} \\
& \le \frac{1}{\alpha}\;,
\end{align*}
where in the last step we used the induction hypothesis.
This completes the proof.
\end{proof}

\section{Conclusions}

We have introduced \algoname, an algorithm for learning in non-stationary stochastic multi-armed bandit environments. The main feature of our algorithm is that its regret scales as $\widetilde O(\sqrt{KSN})$, where $S$ is the number of changes in the identity of the optimal arm. In contrast, existing work for this problem bounds the regret in terms of the number of changes in the reward function, which can be much larger. Our algorithm is based on the \textsc{AdSwitch} algorithm of \citet{auer2018adaptively,AGO-2019}, however, with substantially modified exploration and episode-reset rules.

\bibliography{refs}

\appendix

\section{Auxiliary proofs}
\label{sec:proof_lemmas}

We start with a Freedman-style martingale tail inequality:
\begin{theorem}[\citet{AHKLL-2014}]
\label{thm:freedman}
Let $(\mathcal{H}_t; t\ge 1)$ be a filtration, $(X_t; t\ge 1)$ be a real-valued martingale difference sequence adapted to $(\mathcal{H}_t)$ (i.e., $\E[X_t|\cH_t]=0$). If $|X_t|\le B$ almost surely, then for any $\eta\in (0,1/B]$, with probability at least $1-\delta$,
\[
\sum_{t=1}^n X_t \le \eta (e-2) \sum_{t=1}^n \E[X_t^2|\mathcal{H}_{t}] + \frac{\log(1/\delta)}{\eta} \;.
\]
\end{theorem}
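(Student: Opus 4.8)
The plan is to use the standard exponential supermartingale (Chernoff-for-martingales) method. The engine is a one-variable numerical inequality: for every real $x$ with $|x|\le 1$,
\[
e^x \le 1 + x + (e-2)x^2\,.
\]
I would verify this by writing $e^x-1-x=\sum_{k\ge 2}x^k/k!$; for $0\le x\le 1$ each term is nonnegative and $\sum_{k\ge 2}x^{k-2}/k!\le \sum_{k\ge 2}1/k!=e-2$, while for $-1\le x\le 0$ the function $g(x)=(e-2)x^2-(e^x-1-x)$ satisfies $g(0)=g'(0)=0$ and $g''(x)=2(e-2)-e^x\ge 2(e-2)-1>0$, so $g$ is nonnegative there as well.

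Next I would set $Y_t=\eta X_t$, which satisfies $|Y_t|\le \eta B\le 1$ by the hypothesis $\eta\le 1/B$. Applying the numerical inequality with $x=Y_t$ and taking the conditional expectation given $\cH_t$, using $\E[X_t\mid\cH_t]=0$,
\[
\E\big[e^{\eta X_t}\,\big|\,\cH_t\big] \le 1 + (e-2)\eta^2\,\E\big[X_t^2\,\big|\,\cH_t\big] \le \exp\!\Big((e-2)\eta^2\,\E\big[X_t^2\,\big|\,\cH_t\big]\Big)\,,
\]
where the final step uses $1+u\le e^u$.

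I would then define the process
\[
M_n=\exp\!\left(\eta\sum_{t=1}^n X_t - (e-2)\eta^2\sum_{t=1}^n \E\big[X_t^2\,\big|\,\cH_t\big]\right),\qquad M_0=1\,,
\]
and check that it is a supermartingale: since $\E[X_n^2\mid\cH_n]$ is $\cH_n$-measurable, the display above gives $\E[M_n\mid \text{past}]\le M_{n-1}$, hence $\E[M_n]\le \E[M_0]=1$. Applying Markov's inequality to the nonnegative variable $M_n$ yields $\P(M_n\ge 1/\delta)\le \delta\,\E[M_n]\le\delta$. Finally I would observe that, after taking logarithms and dividing by $\eta>0$, the event $M_n\ge 1/\delta$ is exactly $\sum_{t=1}^n X_t \ge \eta(e-2)\sum_{t=1}^n\E[X_t^2\mid\cH_t]+\tfrac{\log(1/\delta)}{\eta}$, so its complement holds with probability at least $1-\delta$, which is the claim.

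The only genuinely delicate points are the numerical inequality on the negative branch (handled by the convexity argument above) and the bookkeeping of the filtration convention: here $\E[X_t\mid\cH_t]=0$, so $\cH_t$ plays the role of the pre-$X_t$ information and $\E[X_t^2\mid\cH_t]$ is the conditional variance. Keeping this convention straight is precisely what makes the $\cH_t$-measurable compensator cancel the conditional moment-generating bound and renders $M_n$ a supermartingale; everything else is routine. Note also that since the statement concerns a fixed horizon $n$, a plain Markov bound on $M_n$ suffices and no maximal inequality is needed.
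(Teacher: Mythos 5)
Your proof is correct. Note, however, that the paper does not prove this statement at all: Theorem~\ref{thm:freedman} is imported verbatim from \citet{AHKLL-2014}, and the paper's own work only begins with Corollary~\ref{cor:freedman}, which removes the need to know $\sum_t \E[X_t^2\mid\cH_t]$ by a union bound over an exponential grid of $\eta$ values. So there is nothing in the paper to compare your argument against; what you have supplied is the standard (and correct) derivation of the cited result via the exponential supermartingale method. All the key steps check out: the pointwise bound $e^x\le 1+x+(e-2)x^2$ for $|x|\le 1$ (your positive-branch series comparison and negative-branch convexity argument with $g''(x)=2(e-2)-e^x\ge 2e-5>0$ are both sound), the passage to $\E[e^{\eta X_t}\mid\cH_t]\le \exp\bigl((e-2)\eta^2\E[X_t^2\mid\cH_t]\bigr)$ using $\E[X_t\mid\cH_t]=0$ and $\eta B\le 1$, the supermartingale property of $M_n$ (which does require the compensator term $\E[X_t^2\mid\cH_t]$ to be measurable with respect to the pre-$X_t$ information, exactly the filtration convention you flag), and the final Markov bound at level $1/\delta$. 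You are also right that a plain Markov inequality at the fixed horizon $n$ suffices here and no maximal inequality is needed, since the statement is not uniform over $n$.
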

The optimal $\eta$ minimizing the above bound is
$\eta^*=\sqrt{\frac{\log(1/\delta)}{(e-2)\sum_{t=1}^n \E[X_t^2|\mathcal{H}_{t-1}]}}$, which would lead to a bound
\[
\sum_{t=1}^n X_t \le 2\sqrt{(e-2) \log(1/\delta)\sum_{t=1}^n \E[X_t^2|\mathcal{H}_{t}]}\;.
\]
However, this cannot be achieved since it requires the knowledge of the sum of $\E[X_t^2|\mathcal{H}_{t}]$ to tune $\eta$, and it is not guaranteed that $\eta^* \le 1/B$. The next corollary takes care of these problems.

\begin{corollary}
\label{cor:freedman}
Under the conditions of Theorem~\ref{thm:freedman}, for any $0<\delta \le 2/e$,
\[
\left|\sum_{t=1}^n X_t \right|\le 2e \sqrt{(e-2) \log\left(\frac{\log(n/B)+2}{\delta}\right) \sum_{t=1}^n \E[X_t^2|\mathcal{H}_{t}]}
\; \vee \;
2 B \log\left(\frac{\log(n/B)+2}{\delta}\right)
\]
holds with probability at least $1-\delta$.
\end{corollary}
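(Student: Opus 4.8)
The plan is to derive the corollary from Theorem~\ref{thm:freedman} by a peeling argument (a geometric grid of tuning parameters together with a union bound) that removes the two defects of the raw inequality: the variance-optimal choice $\eta^\ast=\sqrt{\log(1/\delta)/((e-2)V)}$, where $V:=\sum_{t=1}^n\E[X_t^2\mid\mathcal{H}_t]$, depends on the \emph{unknown random} quantity $V$, and moreover $\eta^\ast$ need not respect the admissibility constraint $\eta\le 1/\B$ that Theorem~\ref{thm:freedman} requires.

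First I would fix a geometric grid $\eta_k=2^{-k}/\B$, $k=0,1,\dots,L-1$, running from $1/\B$ down to roughly $1/n$. Since $|X_t|\le\B$ forces $\E[X_t^2\mid\mathcal{H}_t]\le\B^2$ and hence $V\le n\B^2$, the optimal $\eta^\ast$ (whenever it is admissible) never drops below the bottom of this grid, so the grid is guaranteed to contain a point within a bounded factor of $\eta^\ast$; the required number of grid points is of order $\log(n/\B)$. I would then invoke Theorem~\ref{thm:freedman} once per grid point at confidence $\delta/L$, and also apply it to the negated sequence $-X_t$ (again a martingale difference with the same conditional variance and the same a.s.\ bound) to capture the lower tail, and union-bound over all these events. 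This produces, with probability at least $1-\delta$, the simultaneous guarantee
\[
\Bigl|\sum_{t=1}^n X_t\Bigr|\le \eta_k(e-2)V+\frac{\log(cL/\delta)}{\eta_k}\qquad\text{for every }k,
\]
where $cL\le\log(n/\B)+2$ after folding the grid cardinality and the two sign choices into the logarithm. The decisive feature is that validity for \emph{all} $k$ at once legitimizes a choice of $k$ that depends on the realized~$V$.

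Conditioning on the realization of $V$, I would split into two cases. If $\eta^\ast\le 1/\B$ (equivalently $V\ge\B^2\log(cL/\delta)/(e-2)$), I select the grid point $\eta_k\in[\eta^\ast/2,\eta^\ast]$; bounding the first summand via $\eta_k\le\eta^\ast$ and the second via $\eta_k\ge\eta^\ast/2$, each term is at most (twice) $\sqrt{(e-2)V\log(cL/\delta)}$, so the total is at most $3\sqrt{(e-2)V\log(cL/\delta)}\le 2e\sqrt{(e-2)V\log(cL/\delta)}$, the first branch of the stated maximum. If instead $\eta^\ast>1/\B$ (small variance, $V<\B^2\log(cL/\delta)/(e-2)$), I use the coarsest admissible point $\eta_0=1/\B$: the first term is then $\tfrac{e-2}{\B}V<\B\log(cL/\delta)$ and the second is exactly $\B\log(cL/\delta)$, giving a total below $2\B\log(cL/\delta)$, the second branch. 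Taking the maximum of the two branches and substituting $cL\le\log(n/\B)+2$ yields the claimed dependence $\log\!\big((\log(n/\B)+2)/\delta\big)$.

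The main obstacle is precisely the interaction between adaptivity and the admissibility constraint: one cannot plug $\eta^\ast$ directly into Theorem~\ref{thm:freedman}, both because $\eta^\ast$ is data-dependent (hence not a legitimate fixed tuning) and because it may exceed $1/\B$. The grid neutralizes adaptivity—simultaneity over the grid makes a $V$-dependent selection valid—while the explicit small-variance case handles inadmissibility by clamping $\eta$ at $1/\B$ and absorbing that regime into the additive $\B\log(\cdot)$ term. The only quantitative care needed is to verify that $V\le n\B^2$ keeps $\eta^\ast$ within the grid's span, so that a boundedly-close grid point always exists in the first case; the constants $2e$ and $2$ then follow from the factor-two slack and are intentionally loose.
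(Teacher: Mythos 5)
Your proposal is correct and follows essentially the same route as the paper's proof: a geometric grid of admissible $\eta$ values, a union bound over the grid and the two signs, selection of a near-optimal grid point based on the realized variance, and a separate clamped case $\eta=1/\B$ when the optimizer is inadmissible. The only difference is cosmetic (a base-$2$ rather than base-$e$ grid, which slightly inflates the grid cardinality and would require coarsening the grid or loosening the constant inside the logarithm to match the stated $\log(n/\B)+2$ exactly).
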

\newcommand{\grid}{\mathcal{G}}
\newcommand{\etamin}{\eta_{\min}}
\begin{proof}
We first prove the bound on $\sum_{t=1}^n X_t$ with $\delta'=\delta/2$, then we apply the same bound to $\sum_{t=1}^n (-X_t)$.
We create an exponential grid for $\eta$; applying Theorem~\ref{thm:freedman} for each value in the grid together with the union bound will prove the corollary.
Since $|X_t|\le B$, the smallest possible value of $\eta^*$ is $\etamin=\sqrt{\frac{\log(1/\delta')}{(e-2)nB}}$.
Assume first that $\etamin\le 1/B$. 
Let $\grid=\{e^{-i}/B: i \in \lfloor \log(n/B)/2\rfloor\}$.
Then, by Theorem~\ref{thm:freedman} and the union bound, with probability at least $1-\delta'$, we have
\begin{equation}
\label{eq:freedman1}
\sum_{t=1}^n X_t \le \eta (e-2) \sum_{t=1}^n \E[X_t^2|\cH_{t}] + \frac{\log(\log(n/B)/2+1)/\delta')}{\eta}
\end{equation}
for all $\eta \in \grid$ simultaneously.

The smallest element of $\grid$, denoted $\etamin'$, satisfies
\[
\etamin'=e^{-\lfloor \log(n/B)/2\rfloor}/B \le e /\sqrt{Bn} \le 
e \sqrt{\frac{\log(1/\delta')}{(e-2)nB}} = e \etamin
\]
where the second inequality holds because $\delta'\le 1/e$.
Therefore, for any $\eta \in [\etamin,1/B]$ there is an $\eta' \in \grid$ such that $\eta\le \eta' \le \eta e$.
Thus, by \eqref{eq:freedman1}, for any $\eta \in [\etamin,1/B]$,
\begin{align*}
\sum_{t=1}^n X_t 
& \le \eta' (e-2) \sum_{t=1}^n \E[X_t^2|\cH_{t-1}] + \frac{\log(\log(n/B)/2+1)/\delta')}{\eta'} \nonumber \\
& \le e \left[\eta (e-2) \sum_{t=1}^n \E[X_t^2|\cH_{t}] + \frac{\log(\log(n/B)/2+1)/\delta')}{\eta}\right]\;.
\end{align*}
Specifically, for the minimizer $\eta_m=\sqrt{\frac{\log(\log(n/B)/2+1)/\delta')}{(e-2)\sum_{t=1}^n \E[X_t^2|\cH_{t}]}}$ of the above bound, we obtain
\begin{equation}
\label{eq:freedman3}
  \sum_{t=1}^n X_t  
  \le 2e \sqrt{(e-2) \log\left(\frac{\log(n/B)/2+1}{\delta'}\right) \sum_{t=1}^n \E[X_t^2|\cH_{t}]}
\end{equation}
If $\eta_m \le 1/B$, the above bound holds. If not,
\[
\sum_{t=1}^n \E[X_t^2|\cH_{t}] \le B^2 \frac{\log(\log(n/B)/2+1)/\delta')}{e-2}
\]
and hence by \eqref{eq:freedman1} for $\eta=1/B \in \grid$, we get
\[
\sum_{t=1}^n X_t 
\le 2 B \log\left(\frac{\log(n/B)/2+1}{\delta'}\right)\;.
\]
Since one of the last two inequalities always hold, we obtain
that with probability at least $1-\delta'$,
\[
\sum_{t=1}^n X_t 
\le 2e \sqrt{(e-2) \log\left(\frac{\log(n/B)/2+1}{\delta'}\right) \sum_{t=1}^n \E[X_t^2|\cH_{t}]}
  \; \vee \;
2 B \log\left(\frac{\log(n/B)/2+1}{\delta'}\right)\;.
\]
Using the same bound for $(-X_t)$, and using the union bound proves the corollary.
\end{proof}

Now we are ready to prove Lemma~\ref{lemma:hoeff}.

\medskip

\noindent {\bf Proof of Lemma~\ref{lemma:hoeff}.}
\begin{proof}
Proof for $\cE_1$: Fix a time $n\in [N]$ and an arm $a\in [K]$. Define variable $X_t =  \one\{A_t=a\} r_t - P_t(a)g_t(a)$. Then $|X_t| \le 1$, $\E[X_t|\cH_t] = 0$, and
\begin{align*}
\E[X_t^2|\cH_t, A_t] 
& = 
\E\left[
\Big(\one\{A_t=a\} \big(r_t - g_t(a)\big) + \big(\one\{A_t=a\} - P_t(a)\big)g_t(a)\Big)^2 \big| \cH_t, A_t
\right] \\
& = \one\{A_t=a\} \E\left[\big(r_t - g_t(a)\big)^2|\cH_t,A_t\right]
+ \big(\one\{A_t=a\} - P_t(a)\big)^2g_t^2(a).
\end{align*}
Therefore, since $r_t$ takes values in $[0,1]$, $\text{Var}[r_t|A_t,\cH_t] \le 1/4$,
\begin{align*}
\E[X_t^2|\cH_t] & \le P_t(a)/4 + P_t(a)(1-P_t(a)) g_t^2(a)
\le 5P_t(a)/4.
\end{align*} 
Therefore, with probability at least $1-\delta'$,
\begin{align*}
\left| \sum_{t=n'}^n X_t \right|
& \le 2e\sqrt{\frac{5}{4}(e-2)\log\left(\frac{\log(n-n'+1)+2}{\delta'}\right) P_{n':n}(a)} \vee 2 \log\left(\frac{\log(n-n'+1)+2}{\delta'}\right) \\
& \le 6\sqrt{\log\left(\frac{\log(n-n'+1)+2}{\delta'}\right) P_{n':n}(a)} \vee 2 \log\left(\frac{\log(n-n'+1)+2}{\delta'}\right)
\end{align*}
Using $\delta'=\delta/(2K N^2)$ and the union bound over all intervals $[n':n]$ and arms $a$ (note that this choice of $\delta'$ satisfies $\delta' \le 2/e$ for all $\delta \in (0,1)$), we
obtain that with probability at least $1-\delta$
\begin{align*}
\left| \widehat{G}_{n:n'}(a) - G_{n:n'}(a)\right|
&\le 6 C_{n,n'} (\sqrt{P_{n:n'}(a)} \vee C_{n,n'}),
\end{align*}
finishing the proof of the bound for $\cE_1$.

For $\cE_2$, letting $X_t=\one\{\widetilde A_t=a\} r_t - P_t(a)g_t(a)$ yields the desired bound similarly, because here one can show that $\E[X_t^2|\cH_t] \le 5/(4K)$.

For $\cE_3$, let $X_t = (\one\{A_t=a\} - P_t(a)) g_t(a')$. Then
$\E[X_t^2|\cH_t] \le P_t$, and we obtain the bound
\begin{align*}
\left| G'_{n:n'}(a,a') - G_{n:n'}(a,a')\right|
&\le 5 C'_{n,n'} (\sqrt{P_{n:n'}(a)} \vee C'_{n,n'}),
\end{align*}
where we introduced $C'_{n,n'}$ because here the union must be taken over all ordered pairs of arms $a,a' \in [K]$ instead of all arms.

Finally, for $\cE_4$ introducing $X_t=\big(g_t(a)-g_t(a')\big)\big(\one\{\widetilde A_t=a\} - 1/K\big)$ gives $\E[X_t^2|\cH_t] \le 1/K$, which yields the desired bound.

\end{proof}

\noindent {\bf Proof of Lemma~\ref{lem:activeArms}.}
\begin{proof}
If $\Delta_{n':n}(a',a) > 24 C_{n',n} \left(\sqrt{P_{n':n}} \vee C_{n',n} \right)$ for $a,a'\in\GOOD_n$, then by the definition of $\cE_1$ and by \eqref{eq:eqpDelta}, $\better{a'}{a}{n}$ is true:
\begin{align*}
\widehat \Delta_{n':n}(a',a) &\ge \Delta_{n':n}(a',a) - 12 C_{n',n} \left(\sqrt{P_{n':n}(a)} \vee C_{n',n} \right)
 > 12 C_{n',n} \left(\sqrt{P_{n':n}(a)} \vee C_{n',n} \right),
\end{align*}
proving the first part of (i).
If $a,a' \in \GOOD_{n+1}$, then $a$ is not eliminated at the end of time step $n$, hence $\better{a'}{a}{n}$ is false. Consequently, $\Delta_{n':n}(a',a) \le 24 C_{n',n} \left(\sqrt{P_{n':n}} \vee C_{n',n} \right)$, finishing the proof of part (i).
Part (ii) can be shown similarly.

\end{proof}

\end{document}